\def\eqref#1{equation~\ref{#1}}
\def\1{\bm{1}}
\DeclareMathAlphabet{\mathsfit}{\encodingdefault}{\sfdefault}{m}{sl}
\SetMathAlphabet{\mathsfit}{bold}{\encodingdefault}{\sfdefault}{bx}{n}
\DeclareMathOperator{\reff}{ref}
\DeclareMathOperator{\pinn}{PINN}
\newcommand{\fX}{\overrightarrow{X}}
\newcommand{\EE}{\mathbb{E}}
\newcommand{\RR}{\mathbb{R}}
\newcommand{\PP}{\mathbb{P}}
\newcommand{\QQ}{\mathbb{Q}}
\newcommand{\bX}{\overleftarrow{X}}
\newcommand{\bQ}{\overleftarrow{Q}}
\newcommand{\bP}{\overleftarrow{P}}
\newcommand{\bd}{\overleftarrow{\d}}
\DeclareMathOperator{\diag}{diag}
\DeclarePairedDelimiterX{\infdivx}[2]{(}{)}{%
  #1\;\delimsize\|\;#2%
}
\newcommand{\dkl}{D_{\text{KL}}\infdivx}
\DeclarePairedDelimiter{\abs}{\lvert}{\rvert}
\letcs\originald{\encodingdefault\string\d}
\DeclareRobustCommand*\d
\ifmmode\mathop{}\!\mathrm{d}\else\expandafter\originald\fi}
\theoremstyle{plain}
\newtheorem{theorem}{Theorem}[section]
\newtheorem{corollary}[theorem]{Corollary}
\theoremstyle{definition}
\newtheorem{idea}[theorem]{Key Idea}
\theoremstyle{remark}
\title{Continuously Tempered Diffusion Samplers}
\author{Ezra Erives, Bowen Jing, Peter Holderrieth \& Tommi Jaakkola \\
CSAIL, Massachusetts Institute of Technology\\
\texttt{\{erives,bjing,phold\}@mit.edu,\,tommi@csail.mit.edu}  \\
\And
}
\begin{document}

\maketitle
\begin{abstract}
Annealing-based neural samplers seek to amortize sampling from unnormalized distributions by training neural networks to transport a family of densities interpolating from source to target. A crucial design choice in the training phase of such samplers is the \textit{proposal distribution} by which locations are generated at which to evaluate the loss. Previous work has obtained such a proposal distribution by combining a partially learned transport with annealed Langevin dynamics. However, isolated modes and other pathological properties of the annealing path imply that such proposals achieve insufficient exploration and thereby lower performance post training. To remedy this, we propose \textit{continuously tempered diffusion samplers}, which leverage exploration techniques developed in the context of molecular dynamics to improve proposal distributions. Specifically, a family of distributions across different temperatures is introduced to lower energy barriers at higher temperatures and drive exploration at the lower temperature of interest. We empirically validate improved sampler performance driven by extended exploration. Code is available at \href{https://github.com/eje24/ctds}{https://github.com/eje24/ctds}.
\end{abstract}

\section{Introduction}
A challenging task in Bayesian statistics and natural sciences is to sample from distributions of the form
\begin{equation}
    \label{eq:pi_and_pi_hat}
    \pi(x) = \frac{1}{Z}\hat{\pi}(x),
\end{equation}
where $\hat{\pi}(x)$ denotes a known unnormalized density and $Z = \int_{\RR^d}\hat{\pi}(x)\, \d x$ denotes the unknown \textit{partition function}. Traditional MCMC-based sampling approaches, including Metropolis-Hastings and Hamiltonian Monte-Carlo (HMC), have been observed to suffer from pseudo-ergodic behavior for high-dimensional or otherwise complex choices of the density $\hat\pi$. Techniques such as annealed importance sampling (AIS) \citep{ais}, and sequential Monte-Carlo-based approaches \citep{smc} work instead over an annealed sequence of densities $\{\hat{\pi}_t\}$ which interpolates a tractable source $\hat{\pi}_0$ and the desired target $\hat{\pi}_1 = \hat \pi$. Recently, \emph{neural samplers} have been proposed which leverage a wide array of learned transport techniques, parameterized with neural networks, to optimize the annealed auxiliary distribution \citep{mcvae, snf, uha_geffner, uha_zhang, mcd, ldvi, cmcd, improved_sampling, nets}.

% Underlying these annealing sampling schemes is a choice of density path  Given such a path, a recent line of work has sought to learn a time-dependent vector field which transports samples instantaneously through the prescribed density path \cite{pinn_workshop, pinn_dynamic_measure_transport, cmcd, lfs, path_guided_particle_sampler, nets}. A number of objectives have been proposed to learn such a control, including a continuous-time ELBO \citep{cmcd}, and log-variance objectives \citep{og_logvar, nusken_hjb, cmcd, improved_sampling}. 

In this work, we consider neural samplers which learn a \emph{control} or \emph{transport} to generate the evolution of a prespecified annealing path $\hat\pi_t$ via a so-called physics-informed neural network (PINN) loss, which penalizes the degree to which the learned vector field fails to satisfy the  continuity equation \citep{pinn, pinn_workshop, boltzmann_interpolation, nets}. The off-policy nature of the PINN loss means that a crucial design choice in PINN-based sampler training is the \textit{proposal distribution} $\tilde{\pi}_t$ by which samples are generated at which to evaluate the PINN loss. To this end, recent work has proposed to construct $\tilde{\pi}_t$ as the marginals of annealed Langevin dynamics steered with the partially-learned control. By augmenting the learned control in this manner, samples are driven towards high density regions of the density path, allowing for boostrapped training in which exploration gradually improves with the quality of the learned control. However, for many choices of density path, including the commonly used linear interpolation (see \cref{eq:linear_path}), isolated modes, high energy barriers, and other problematic features of the energy landscape obstruct this exploration process, resulting in poor sampler performance \cite{boltzmann_interpolation}. 
% in which the learned vector field is realized as the \textit{control} of a Langevin dynamics over the density path $\{\hat{\pi}_t\}$ \cite{cmcd, nets}. 
% While the true path $\pi_t$ is an attractive choice of proposal distribution, it is often intractable. \rednote{make this more precise - can we formalize that its the best by showing that it minimizes the PINN/logvar tightness}

\begin{figure}[!t]
    \centering
    \includegraphics[width=\textwidth]{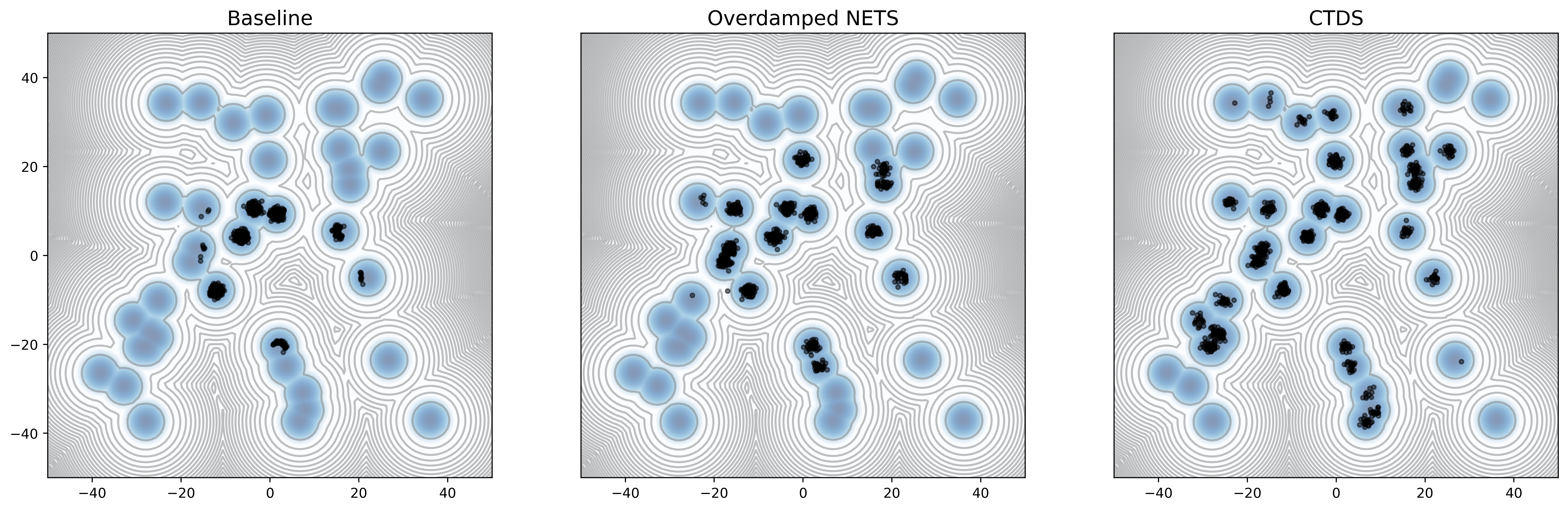}
    \caption{A comparison of samples generated by simulating the ODE given by a control $\mu_t^\theta(x_t)$ learned with various choices of proposal for the 40-mode Gaussian mixture in \cref{fab_experiment}. Left: Trained with a baseline proposal obtained by simulating the control with no added Langevin dynamics. Center: Trained with a Jarzynski-reweighted controlled overdamped Langevin proposal as in \citep{nets}. Right: Trained with a Jarzynski-reweighted CTDS proposal.}
    \label{fig:experiment_scatter}
\end{figure}
To alleviate this problem, we turn toward \textit{tempering techniques}, which are widely used in e.g., molecular dynamics to overcome high energy barriers between separate metastable states. Their success is based primarily on 
the insight that such barriers are lowered when the temperature is raised \citep{simulated_tempering, parallel_tempering_1986, parallel_tempering_1997, parallel_tempering_1999, md_continuous_tempering}. Specifically, we draw inspiration from recently proposed \textit{continuous tempering} approaches, in which Langevin dynamics is run over an augmented state space given by both position and a continuous temperature variable \cite{continuous_tempering, continuously_tempered_hmc, thermostat_enabled}. We propose to use such a continuously tempered dynamics as the basis for the proposal distribution, in doing so exploiting lower energy barriers at higher temperatures to drive exploration and thus better training. Specifically, we make the following contributions:

\begin{enumerate}
    \item We extend the notion of a time-indexed density path to a time-and-temperature-indexed \textit{density continuum}. We propose a multi-temperature PINN objective over such a continuum which allows for sampler training to be amortized \textit{across temperatures}.
    % , and show that in certain conditions a continuous solution is guaranteed to exist.
    % \item \peter{Too long. See proposal below.} By introducing a ficticious density over temperature, we may realize such a density continuum as a density path over a temperature-augmented state. We propose the \textit{continuously tempered diffusion sampler} (CTDS), as a controlled annealed (underdamped) Langevin scheme over such a joint density, and realize such a scheme as the proposal distribution for the proposed multi-temperature PINN objective. Additionally, we extend recent generalizations of the Crooks fluctuation and Jarzynski equalities to the continuously tempered setting.
    \item To learn dynamics across the time-temperature continuum, we propose the \textit{continuously tempered diffusion sampler (CTDS)}, a scheme that allows for efficient exploration during sampler training by leveraging continuous tempering techniques.
    \item We validate the performance of CTDS-trained samplers against those trained with existing proposals and demonstrate improved sampler performance.
\end{enumerate}

\section{Background}
\paragraph{Notation}We will denote normalized and unnormalized probability densities by $\pi$ and $\hat{\pi}$ respectively. Further, we will call $F = - \log Z$ the \emph{free energy} and $U = -\log \hat{\pi}$ the \emph{energy}, so that $\pi(x) = \frac{1}{Z}\hat{\pi}(x) = e^{-U(x) + F}$. Additionally, we will denote by $\mathcal{T}$ the unit-time interval $[0,1]$, and by $\mathcal{P}(\mathcal{X})$ the space of twice-differentiable, fully-supported, normalized densities on the space $\mathcal{X}$. When we work over phase space coordinates, we will denote the momentum as $p$. 
%\subsection{Variational Inference with Langevin Annealing}
%\label{subsec:var_inf_with_langevin}
\subsection{Controlled Langevin Annealing}
\label{subsec:controlled_langevin_annealing}
To overcome pseudo-ergodic behavior in traditional MCMC-based sampling approaches \citep{metropolis_hastings, hmc}, techniques such as annealed importance sampling (AIS) \cite{ais}, and later sequential Monte Carlo (SMC) based approaches \cite{smc}, work by simulating MCMC over an annealed \textit{density path}
\begin{equation}
    \label{eq:density_path}
    \{\hat{\pi}_t\}_{t \in \mathcal{T}}, \quad \quad \hat{\pi}_0 \propto \pi_{\text{source}},\,\quad \text{and}\,\quad \hat{\pi}_1 \propto \pi_{\text{target}},
\end{equation}
interpolating between $\pi_{\text{source}}$ and $\pi_{\text{target}}$. 
%Based on AIS and SMC, more sophisticated techniques have since been introduced by combining Monte-Carlo methods with variational inference, involving minimizing an ELBO over an auxiliary annealing path distribution \citep{barber_auxiliary}. These include those based an uncorrected overdamped \citet{snf, mcvae} and underdamped \citep{uha_geffner, uha_zhang} Langevin dynamics, and subsequent extensions \citep{mcd, ldvi} based on learning the time-reversal with score matching \citep{yang_song_diffusion}. % Separately \citep{fab} proposes to combine AIS with a variational base density, along with a novel alternative to the ELBO based on an $\alpha$-divergence.% vaikuntanathan2008escorted, , thereby connecting with existing bridging approaches \citep{pis, dis, dds, improved_sampling} which have no explicit notion of density path
Recent work has since emphasized combining the sampling process with a learned transformation using e.g., normalizing flows \citep{snf, aft, craft}, and, in the continuous time setting, a learned vector field \citep{cmcd, lfs, path_guided_particle_sampler, nets}. Following the second approach, and given a prescribed density path as in \cref{eq:density_path}, we may consider a controlled (underdamped) Langevin dynamics
\begin{equation}
    \label{eq:controlled_underdamped_langevin}
    \begin{aligned}
        dX_t &= \mu_t^{\theta}(X_t) + \gamma_t M^{-1}P_t\d t\\ dP_t &= \gamma_t\left[\nabla \log \pi_t(X_t) - \varepsilon_t M^{-1}P_t\right]\d t + \sqrt{2 \gamma_t \varepsilon_t}\d W_t
    \end{aligned}
\end{equation}
as in \citep{zhong_asymmetric, nets}, as well as the overdamped limit given by
\begin{equation}
    \label{eq:controlled_overdamped_langevin}
    dX_t = \left[\mu_t^{\theta}(X_t) + \varepsilon_t \nabla \log p_t(X_t)\right] \d t + \sqrt{2\varepsilon_t}\d W_t,
\end{equation}
as in \citet{nets, cmcd}, where in both cases $\mu_t^\theta$ is the learned, time-dependent control.\footnote{The underdamped parameterization is slightly different than analogous formulations presented in \citet{ldvi, cmcd}.} Here, $W_t$ denotes a standard Brownian motion on $\mathbb{R}^d$, ($d = d_x$ being the dimensionality of the data), and $\gamma_t$, $\varepsilon_t$, and $M$ denote scaling and damping coefficients, and particle weight, respectively. The advantage of the controlled Langevin schemes in \cref{eq:controlled_underdamped_langevin} and \cref{eq:controlled_overdamped_langevin} is that the Langevin dynamics can be seen as correcting for an imperfectly learned control, and thus allowing for a bootstrapped training procedure in which the marginals of the forward process converge on the desired density path as the control improves \citep{nets}. In particular, perfect sampling is recovered as when $\gamma_t \to \infty$ in \cref{eq:controlled_underdamped_langevin} and $\varepsilon_t \to \infty$ in \cref{eq:controlled_overdamped_langevin}, or when the the control is learned perfectly so that it satisfies the continuity equation 
\begin{equation}
    \label{eq:continuity_eq}
    \partial_t \pi_t(x_t) = - \nabla \cdot \left[\mu_t^{\theta}(x_t)\pi_t(x_t)\right],\quad\quad\quad \forall (x_t,t) \in \mathbb{R}^d \times \mathcal{T},
\end{equation}
and thus facilitates an instantaneous transition along the density path for \textit{any} choice of $\gamma_t$ or $\epsilon_t$. Several popular families of training objective have emerged to learn such a control, including minimizing a path KL divergence \citep{cmcd, pis, dis, dds}, log-variance objectives \citep{og_logvar, nusken_hjb, cmcd, improved_sampling}, and physics-informed neural network (PINN) losses \citep{pinn, expert_guide_pinn, pinn_workshop}. The last of these is the focus of this work and will be discussed more below.

The success of controlled annealing approaches depends heavily on the choice of density path, for which the most common choice is linear path 
\begin{equation}
    \label{eq:linear_path}
    U_t^{\text{linear}}(x) \triangleq (1-t)U_0(x) + tU_1(x).
\end{equation}
While nearly universally available, the linear path has been shown to suffer from certain regularity issues including the so-called teleportation of mass phenomenon described in \citep{boltzmann_interpolation}, in certain cases causing any control satisfying \cref{eq:continuity_eq} to exhibit singularities making it hard to integrate numerically, let alone learn. One approach toward alleviating these issues is proposed in \citet{boltzmann_interpolation}, in which an additional learned term is added to the linear path from \cref{eq:linear_path} to obtain
\begin{equation}
    \label{eq:learned_path}
    U_t^{\text{linear}}(x) \triangleq (1-t)U_0(x) + tU_1(x) + t(1-t)U_t^\theta(x),
\end{equation}
which we shall refer to as the \textit{learned path}.% A number of alternative paths have similarly been proposed, including mode-interpolating paths \cite{nets}, log-weighted shrinkage paths \cite{path_guided_particle_sampler}, and those based on gradient flows \citep{fisher_rao_boltzmann}.

\subsection{PINN-Based Objectives}
In the sampler setting, PINN-based objectives seek to learn the control $\mu_t^{\theta}$ by minimizing the expected pointwise residual error of the continuity equation \cref{eq:continuity_eq}. While such an objective, as originally formulated, requires access to the generally intractable free energy $F_t$, the free energy may be jointly learned along with the control, yielding the modified PINN objective
\begin{equation}
    \label{eq:pinn}
    \mathcal{L}_{\text{PINN}}(\mu_t^{\theta},F_t^{\theta}; \tilde{\pi}_t) \triangleq \int_{\mathcal{T} \times \mathbb{R}^d}\lvert  \partial_t F^{\theta}_t - \partial_t U_t(x) + \nabla_x \cdot \mu_t^\theta(x) - (\nabla_x U_t(x))^T \mu_t^\theta(x) \rvert^2\,\tilde{\pi}_t(x) \d x \d t
\end{equation}
for which the true free energy $F_t$ can be shown to be the unique minimizer \citep{boltzmann_interpolation, nets}. More details on the PINN loss, including a derivation, can be found in \cref{appendix:pinn}. Besides the choice of density path $\pi_t$, the fundamental object underpinning \cref{eq:pinn} is the \textit{proposal distribution} $\tilde{\pi}_t$, by which samples are drawn at which to evaluate the objective. A good choice of proposal distribution $\tilde{\pi}_t$ is given by the density path itself (i.e., when $\tilde{\pi}_t = \pi_t$), but such a choice is nearly always intractable. As an alternative, recent work has proposed to construct such a proposal distribution using a controlled annealed Langevin scheme \citep{nets}.
\begin{tcolorbox}[colback=gray!10,colframe=gray!0,arc=0mm]
\begin{idea}[Proposal via controlled Langevin annealing]
    Construct the PINN proposal $\tilde{\pi}$ as the marginals of a controlled Langevin annealing process, as is given in \cref{eq:controlled_underdamped_langevin} and \cref{eq:controlled_overdamped_langevin}. Intuitively, as the quality of the learned control increases, so will the mode-coverage of the proposal. \textit{Optionally}, one may additionally reweigh samples toward the ground-truth marginal $\pi$ using a Jarzynski-like equality \citep{nets}.
\end{idea}
\end{tcolorbox}

% Certain approaches have instead considered taking $\tilde{\pi}_t$ as the uniform distribution on some time-dependent bounded region known to ``contain'' $\pi_t$, \citet{pinn_workshop, pinn_dynamic_measure_transport, fisher_rao_boltzmann}, although such approaches scale poorly with dimension and require an approximate knowledge of the target to be viable. A promising alternative is to use the control $\mu_t^\theta(x)$ to construct a forward proposal process such as in e.g., \cref{eq:controlled_underdamped_langevin} and \cref{eq:controlled_overdamped_langevin}, thereby obtaining $\tilde{\pi}_t$ implicitly as the marginals of such this process \cite{boltzmann_interpolation, nets}.

\subsection{Tempering Techniques for Enhanced Exploration}
\label{subsec:tempering}
One family of techniques to improving exploration are tempering-based methods, based on the fundamental insight that energy barriers are lowered and mixing times are improved when one \textit{raises the temperature} of the target to obtain $\pi_t^{\beta}(x) \propto \pi_t(x)^{\beta}$, for inverse temperature $\beta \triangleq \frac{1}{T} < 1$. To exploit this insight, approaches such as parallel tempering \citet{parallel_tempering_1986, parallel_tempering_1997, parallel_tempering_1999} and simulated tempering \citet{simulated_tempering} have been proposed. These are MCMC-based sampling schemes over a temperature ladder along with mechanisms by which samples may jump between temperatures, thereby allowing faster-mixing dynamics at higher temperatures to drive convergence at a lower temperature of interest.

More recently, continuous-tempering schemes have been proposed involving a Hamiltonian dynamics over a temperature-augmented extended state thereby allowing for a smoothly-varying temperature variable \cite{continuous_tempering, continuously_tempered_hmc, thermostat_enabled}, and demonstrating improvements over simulated and parallel tempering \cite{continuously_tempered_hmc}. Such continuous tempering schemes have found utility in MD simulations \citet{md_continuous_tempering}, Bayesian posterior inference \citet{thermostat_enabled}, and in improving the convergence of piece-wise deterministic Markov processes \citet{ct_pdmp}. In this work, we extend the application of continuous tempering to the annealed neural sampler setting.

\section{Continuously Tempered Diffusion Samplers}
In this section, we present our main contribution: combining the controlled Langevin annealing schemes of \cref{subsec:controlled_langevin_annealing} with the continuous tempering approaches outlined in \cref{subsec:tempering}, obtaining a new, tempered class of annealing schemes over a continuously-varying temperature variable, which we refer to as continuously tempered diffusion samplers (CTDS).

\subsection{Tempered Density Continuums}
\label{subsec:density_continuum}
Recall that a density path is a $t$-indexed annealing path $\hat{\pi}_t(x) \triangleq e^{-U_t(x)}$ which agrees with a prescribed source density $\hat{\pi}_0$ and target density $\hat{\pi}_1$. It is then natural that we extend the notion of density path to the tempered setting by affixing an additional indexing variable - the inverse-temperature variable $\beta$ - to obtain the annealing continuum
 \begin{equation}
     \label{eq:density_continuum}
     \hat{\pi}_t^{\beta}(x) \triangleq e^{-U_t^{\beta}(x)},\quad \quad U_t^{\beta}(x) \propto \beta U_t^1(x) \quad \quad \quad \forall (t,b) \in \mathcal{T} \times \mathcal{B},
 \end{equation}
 where we have defined $\mathcal{B} \triangleq [\beta_{\text{min}}, 1]$ as the range of inverse-temperatures $\beta$ of interest. We refer to the resulting continuously-indexed collection of densities $\{\hat{\pi}_t^{\beta}(x)\}_{t,\beta \in \mathcal{T} \times \mathcal{B}}$ as a \emph{density continuum}. Given a target density $U^{\text{target}}$ of interest, two such density continuums are the \emph{linear continuum} and the \emph{learned continuum} defined respectively for a prescribed target density $U_1^1(x) = U^{\text{target}}(x)$ by $\pi_0^{\beta}(x) = \mathcal{N}(0,\frac{1}{\beta}I_d)$ (thereby defining $U_0^{\beta}$) and
 \begin{align}
     U_t^{\beta}(x) &= (1-t)U_0^{\beta}(x) + tU_1^{\beta}(x) && \blacktriangleright \text{linear density continuum}\label{eq:linear_continuum}\\
     U_t^{\beta}(x) &= (1-t)U_0^{\beta}(x) + tU_1^{\beta}(x) + \beta t (1-t)U_t^{\theta}(x)  && \blacktriangleright \text{learned density continuum}\label{eq:learned_continuum}
 \end{align}
 for all $(t,b) \in \mathcal{T} \times \mathcal{B}$. In the next two sections, we will construct an annealed Langevin dynamics over such a tempered density continuum. Doing so directly over $\mathcal{B}$ presents two challenges: First, it is difficult to handle the boundaries of $\mathcal{B}$ (at $\beta_{\text{min}}$ and $1.0$) without introducing some sort of boundary condition on the Langevin dynamics, or with the introduction of some confining potential, neither of which are particularly desirable. Second, samples from any continuous marginal density over $\beta$ would almost surely not be equal to either $\beta=\beta_{\text{min}}$ (thereby discouraging mixing behavior at high temperatures) or $\beta=1.0$ (thereby not providing training signal at the original temperature of interest). We therefore follow the lead of \citet{continuous_tempering} and reparameterize the inverse-temperature coordinate by pulling back to the space $\Xi \triangleq \mathbb{R}$ via a carefully defined continuous mapping $\beta(\xi): \Xi \to \mathcal{B}$ in a manner which resolves both issues; see \cref{appendix:reparameterization} for details.
%  with $\beta: \xi \mapsto \beta$ in a manner This solution allows us to

\subsection{PINN Objectives Across Temperature}
\label{subsec:ct_pinn}

For fixed $\xi=\xi^\star$ (the case of $\beta(\xi^\star) = 1$ being of particular interest), we recover the standard annealed sampling problem, in which we would like to learn a control $\mu_t^\theta(x, \xi^\star)$ which satisfies the continuity equation
\begin{equation}
    \label{eq:fixed_z_continuity_eq}
    \partial_t \pi_t^{\xi^\star}(x) = \nabla_x \cdot [\mu_t^\theta(x,\xi^\star)\pi_t^{\xi^\star}(x)],
\end{equation}
corresponding to $\{\hat{\pi}_t^{\xi^\star}(x)\}_{t \in \mathcal{T}}$ (and in turn, to the path $\{\hat{\pi}_t^{\beta(\xi^\star)}(x)\}_{t \in \mathcal{T}}$). To obtain such a control, let us consider a multi-temperature variant of the PINN loss over the space $\mathbb{R}^d \times \mathcal{T} \times \Xi$ from \cref{eq:pinn}, given by
\begin{equation}
\label{eq:ct_pinn}
\mathcal{L}^{\text{MT}}_{\pinn}(F^{\theta},\mu^{\theta}; \tilde{\pi}) \triangleq \int_{\mathcal{T} \times \Xi \times \mathbb{R}^d} \lvert  \partial_t F_t^{\theta} - \partial_t U_t^{\xi} + \nabla_x \cdot \mu_t^{\theta} - (\nabla_x U_t^{\xi})^T \mu_t^{\theta} \rvert^2\,\tilde{\pi}_t^{\xi}(x,\xi) \d x \d \xi \d t,
\end{equation}
for free energy estimate $F: \mathcal{T} \times \Xi \to \mathbb{R}$, control $\mu: \mathcal{T}\times \mathbb{R}^d \times \Xi \to \mathbb{R}^d$, and proposal distribution $\tilde{\pi}_t(x,\xi)\in \mathcal{P}(\mathbb{R}^d)$. Now, defining the free energy $F_t(\xi)$ as $F_t(\xi) \triangleq \int_{\mathbb{R}^d} e^{- U_t^{\xi}(x)}\,\d x$, we propose the following result which characterizing the minimizing $F$ of \cref{eq:ct_pinn}.

\begin{tcolorbox}[colback=gray!10,colframe=gray!0,arc=0mm]
\begin{theorem}
\label{thm:ct_pinn_characterization}
    Let $\tilde{\pi}_t(x, \xi) \in \mathcal{P}(\mathbb{R}^d \times \Xi)$ and suppose there exists a pair $(F^{\star}, \mu^{\star})$ which satisfies
    \begin{equation*}
        \mathcal{L}_{\pinn}(F^{\star},\mu^{\star}; \tilde{\pi}) = 0,
    \end{equation*}
    as well as the boundary condition $F_0^\star(\xi) = F_0(\xi)$ for all $\xi \in \Xi$. In this case, $F^{\star}$ is unique and is given by the free energy $F_t(\xi)$. 
\end{theorem}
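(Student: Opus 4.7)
The plan is to exploit the fact that the PINN loss being zero, combined with the full support of $\tilde\pi \in \mathcal{P}(\RR^d \times \Xi)$, forces the integrand to vanish pointwise, producing an identity whose left-hand side depends only on $(t,\xi)$. Integrating in $x$ against the true density $\pi_t^\xi$ will then cleanly eliminate the $\mu^\star$-dependent terms via the divergence theorem, reducing the problem to a first-order ODE in $t$ for $F^\star$ that matches the ODE satisfied by the true $F_t$; the boundary condition then pins down the unique solution.

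Concretely, since $\tilde\pi_t^\xi$ is strictly positive (elements of $\mathcal{P}$ are fully supported by definition) and the PINN integrand is a non-negative square, the hypothesis $\mathcal{L}_{\pinn}^{\text{MT}}(F^\star, \mu^\star; \tilde\pi) = 0$ forces the pointwise identity
\[
\partial_t F_t^\star(\xi) \;=\; \partial_t U_t^\xi(x) \;-\; \nabla_x \cdot \mu_t^\star(x,\xi) \;+\; (\nabla_x U_t^\xi(x))^T\, \mu_t^\star(x,\xi).
\]
The left-hand side is independent of $x$, so multiplying by $\pi_t^\xi(x)$ and integrating over $\RR^d$ is harmless on the left but informative on the right. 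Using $\nabla_x \pi_t^\xi = -(\nabla_x U_t^\xi)\,\pi_t^\xi$, the $\mu^\star$-dependent contribution collapses to $-\nabla_x\!\cdot(\mu_t^\star\, \pi_t^\xi)$, whose integral vanishes by the divergence theorem, leaving $\partial_t F_t^\star(\xi) = \mathbb{E}_{\pi_t^\xi}[\partial_t U_t^\xi]$. This is precisely the ODE satisfied by the true free energy, obtained by differentiating $F_t(\xi) = -\log\int e^{-U_t^\xi(x)}\,\d x$ under the integral sign. Combined with the boundary condition $F_0^\star(\xi) = F_0(\xi)$ and integrating in $t$, we obtain $F_t^\star(\xi) = F_t(\xi)$ for every $(t,\xi)\in\mathcal{T}\times\Xi$, establishing uniqueness.

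The main obstacle is the technical justification of the integration-by-parts step, which requires $\mu_t^\star(x,\xi)\, e^{-U_t^\xi(x)}$ to decay sufficiently at spatial infinity for boundary terms to vanish. This is essentially the same regularity condition implicit in requiring the continuity equation to hold on $\RR^d$ without flux at infinity, and would typically be handled by imposing at most polynomial growth on $\mu^\star$ together with the coercivity of $U_t^\xi$ inherited from the Gaussian source $\pi_0^\beta$ in either the linear or learned continuum. I would also note that this argument gives no information about uniqueness of the control $\mu^\star$ itself, which is consistent with the theorem's asymmetric claim that only $F^\star$ is uniquely determined.
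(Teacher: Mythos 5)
Your proof is correct and is essentially the same argument as the paper's: pointwise vanishing of the PINN residual (by full support of $\tilde\pi$), followed by integrating against $\pi_t^\xi$ so that the $\mu^\star$-dependent terms collapse to a total divergence and vanish, yielding $\partial_t F_t^\star(\xi) = \partial_t F_t(\xi)$, and concluding via the boundary condition. The only cosmetic difference is the direction of the algebra — the paper substitutes the PINN identity into the formula for $\partial_t F_t$, whereas you multiply the identity directly by $\pi_t^\xi$ and integrate — and your remark about decay conditions justifying the divergence-theorem step is a legitimate regularity caveat the paper leaves implicit.
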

\end{tcolorbox}
The result from \cref{thm:ct_pinn_characterization} extends an existing result in the single-temperature setting (\cite{boltzmann_interpolation}, Lemma 1) to the multi-temperature setting, and a short proof can be found in \cref{appendix:mt_pinn_existence}. Furthermore, \cref{thm:ct_pinn_characterization} establishes that we may learn the control $\mu_t^\theta(x,\xi)$ by simultaneously learning the free energy $F_t^{\theta}(\xi)$. To construct such a proposal $\tilde{\pi}_t(x,\xi)$, we look to finally exploit the tempered nature of the density continuum $\{\hat{\pi}_t^{\xi}(x)\}$, as we discuss next.

\subsection{Continuously Tempered Diffusion Samplers}
\label{subsec:ctds}
In this section, we will at last formulate our main contribution. First, we will realize the density \emph{continuum} as a density \emph{path} by introducing a fictitious, time-dependent density over $\xi$. Second, we will construct the \emph{continuously tempered diffusion sampler}, a controlled annnealed Langevin dynamics over such a density path. By introducing some time-dependent, \emph{normalized} potential $\psi_t(\xi): \mathcal{T} \times \Xi \to \mathbb{R}$, we may realize the continuum $\{\hat{\pi}_t^{\xi}(x)\}$ as a joint density 
\begin{equation}
    \label{eq:xz_joint_density}
    \hat{\pi}_t(x, \xi) \triangleq e^{-U_t^{\xi}(x) + \psi_t(\xi)} = e^{-\tilde{U}_t(x,\xi)},\quad\quad\quad \forall (t,\xi) \in \mathcal{T} \times \Xi
\end{equation}
over the augmented state $(x,\xi)$, where we have defined $\tilde{U}_t(x,\xi) \triangleq U_t(x,\xi) - \psi_t(\xi)$. Then it is easily seen that the marginal $\pi_t(\xi)$ is given by $ \pi_t(\xi) \propto e^{\psi_t(\xi) - F_t(\xi)}$. As a consequence, we may rewrite $\psi_t^{\theta}(\xi) = \psi_t'(\xi) + F_t^{\theta}(\xi)$, where $F_t^{\theta}(\xi)$ is a learned approximation of the free energy as in \cref{eq:pinn}, and so that as $F_t^{\theta}(\xi) \to F_t(\xi)$, $\pi_t(\xi)$ becomes distributed proportionally to $e^{\psi_t'(\xi)}$. In doing so, we obtain 
\begin{equation}
    \label{eq:xz_theta_joint_density}
    \hat{\pi}_t^{\theta}(x, \xi) = e^{-U_t^{\xi}(x) + F_t^\theta(\xi) + \psi_t'(\xi)} = e^{-\tilde{U}_t^{\theta}(x,\xi)},
\end{equation}
so as to ``cancel-out'' the bias of the unknown free energy $F_t(\xi)$ as the quality of our learned approximation $F_t^{\theta}(\xi)$ increases. In \cref{appendix:biasing_and_confining_potentials}, we characterize more explicitly the construction of $\psi_t'(\xi)$ in practice, and in particularly the inclusion of a confining potential term. Denoting by $q_t = (x_t, \xi_t)$ and $p_t = (p_t^x, p_t^{\xi})$ we may now define the \textit{non-separable} Hamiltonian
\begin{equation}
    \mathcal{H}_t^{\theta}(q_t,p_t) \triangleq \tilde{U}_t^{\theta}(q_t) + K(q_t, p_t), \quad \quad K(q_t,p_t) \triangleq \frac{\beta(\xi_t)}{2M_x} \lVert p_t^x \rVert^2 + \frac{1}{2M_{\xi}} \lVert p_t^{\xi} \rVert^2.
\end{equation}
With the inclusion of the learned control $\mu_t^\theta$, we obtain the controlled annealed underdamped Langevin dynamics given by
\begin{equation}
\label{eq:ctds}
\begin{aligned}
\d X_t &= \mu_t^{\theta}(X_t,\xi_t) +\gamma_t^x \nabla_{p^x} K(q_t, p_t) \d t\\
\d \xi_t &= \gamma_t^{\xi} \nabla_{p^{\xi}} K(q_t, p_t)\\
\d P^x_t &= \gamma_t^x\left[-\nabla_{x} \mathcal{H}_t^{\theta}(q_t,p_t) - \varepsilon_t^x \nabla_{p^x} K(q_t,p_t)\right]\d t + \sqrt{2 \gamma_t^{x} \varepsilon_t^x}\d W_t^{x}\\
\d P_t^{\xi} &= \gamma_t^{\xi}\left[-\nabla_{\xi} \mathcal{H}_t^{\theta}(q_t,p_t) - \varepsilon_t^{\xi} \nabla_{p^{\xi}} K(q_t,p_t)\right]\d t + \sqrt{2 \gamma_t^{\xi} \varepsilon_t^{\xi}}\d W_t^{\xi}\\
(X_0, P_0^x, \xi_0, P_0^{\xi}) &\sim \pi_0^{\dagger}(x, \xi, p^x, p^{\xi}).
\end{aligned}
\end{equation}
where the extended joint density $\pi_t^{\dagger}$ is given by
\begin{equation}
    \label{eq:pi_dagger}
    \pi_t^\dagger(x_t, \xi_t, p_t^x, p_t^{\xi}) = \pi_t^{\theta}(x_t, \xi_t)\mathcal{N}(p_t^x;0, \frac{M_x}{\beta(\xi_t)}I_d)\mathcal{N}(p_t^{\xi};0, M_{\xi}) \propto e^{-\mathcal{H}_t^{\theta}(q_t,p_t)}.
\end{equation}
In \cref{eq:ctds}, we have used $(\gamma_t^x, \gamma_t^{\xi})$, $(\varepsilon_t^x, \varepsilon_t^{\xi})$, and $(M_x, M_{\xi})$ to denote the respective scaling, damping, and mass coefficients for each of $x$ and $\xi$, and have denoted by $W_t^x$ and $W_t^{\xi}$ standard Brownian motions on $\mathbb{R}^d$ and $\Xi$ respectively. We refer to this scheme as a \textit{continuously tempered diffusion sampler} (CTDS). We may then construct the proposal $\tilde{\pi}_t^{\xi}$ as the marginals of the forward process given by \cref{eq:ctds}.

\subsection{A Continuously Tempered Controlled Jarzynski Result}
\label{subsec:crooks_and_jarzynski}
For imperfectly learned control $\mu_t^\theta(x, \beta(\xi))$ and finite scaling coefficients $\gamma_t^{x}$ and $\gamma_t^{\xi}$, the proposal distribution $\tilde{\pi}_t^{\beta}$ induced by the marginals of the CTDS forward process in \cref{eq:ctds} will lag behind the true density path $\pi_t(x, \xi)$. To correct for this discrepancy, we derive a controlled Jarzynski equality, as in \citep{cmcd,nets}, so that we may reweigh samples from \cref{eq:ctds} to the correct joint density $\pi_t^{\theta}(x,\xi)$ described in \cref{eq:xz_theta_joint_density}. Specifically, for fixed $T \in \mathcal{T}$, we may consider the work functional as in \citep{vaikuntanathan2008escorted, zhong_asymmetric, nets}, given by
\begin{equation}
        \label{eq:work}
        A_T(Q) \triangleq \int_0^T \nabla_x \cdot \mu_t^{\theta}(x_t, \xi_t) - \partial_t \log \tilde{U}^{\theta}_t(x_t, \xi_t) - \mu_t^{\theta}(x_t,\xi_t)^T \nabla_x \tilde{U}_t^{\theta}(x_t, \xi_t) \d t,
\end{equation}
where we have denoted $q_t = (x_t, \xi_t)$ and $Q = \{q_t\}_{t \in [0,T]}$. Defining $\PP_T$ as the path measure of the forward process \cref{eq:ctds} on the interval $[0,T] \subseteq \mathcal{T}$, and letting $F_t$ denote the free energy of the density path $\hat{\pi}_t(x, \xi)$, we have the following controlled Jarzynski identity.
\begin{tcolorbox}[colback=gray!10,colframe=gray!0,arc=0mm]
\begin{theorem}[Continuously Tempered Controlled Jarzynski Equality]
\label{thm:ct_jarzynski}
For $T \in (0,1]$, let $h: \mathbb{R}^d \times \Xi \to \mathbb{R}$ denote some observable. Then 
\begin{equation}
    \EE_{(x,\xi) \sim \pi_t(x, \xi))} \left[h(x,\xi)\right] = \frac{\EE_{Q \sim \PP_T} \left[ h(Q)\exp\left(A_T(Q)\right)\right]}{\EE_{Q \sim \PP_T} \left[ \exp\left(A_T(Q)\right)\right]}.
\end{equation}
In particular, 
\begin{equation}
    \EE_{(x,\xi) \sim \pi_t(x, \xi))} \left[\exp(A_T(Q)) \right] = \exp(F_0 - F_T),
    \label{eq:jarzynski}
\end{equation}
where we have defined $F_t \triangleq \int_{\Xi \times \mathbb{R}^d} e^{-\tilde{U}_t^{\theta}(x,z)}\, \d x \d z$.
\end{theorem}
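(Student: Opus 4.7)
The plan is to prove \Cref{thm:ct_jarzynski} by a standard Feynman--Kac / path-measure tilting argument, adapted from the single-temperature controlled Jarzynski results of \citet{cmcd,nets} to the augmented state space $(x,\xi,p^x,p^\xi)$. Write $Z_t=(X_t,\xi_t,P_t^x,P_t^\xi)$ for the CTDS state and let $\mathcal{L}_t$ be the (time-dependent) infinitesimal generator read off from \eqref{eq:ctds}. The heart of the argument is to identify the weighted measure
\begin{equation*}
\nu_t(\phi)\;:=\;\EE_{Q\sim\PP_t}\bigl[\phi(Z_t)\exp(A_t(Q))\bigr]
\end{equation*}
for bounded test functions $\phi$, and to show that its density $\nu_t(q,p)$ is proportional to $\pi_t^{\dagger}(q,p)\propto e^{-\mathcal{H}_t^{\theta}(q,p)}$ with a time-dependent proportionality constant that encodes the free-energy ratio.

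First, I would apply It\^o's lemma to $\phi(Z_t)\exp(A_t)$; since $A_t=\int_0^t\dot{A}_s\,\d s$ has finite variation along paths, the dual Kolmogorov equation for the density reads
\begin{equation*}
\partial_t\nu_t(q,p)\;=\;\mathcal{L}_t^{\ast}\nu_t(q,p)\;+\;\dot{A}_t(q)\,\nu_t(q,p),
\end{equation*}
with initial condition $\nu_0=\pi_0^{\dagger}$, where $\dot{A}_t(q)=\nabla_x\!\cdot\mu_t^{\theta}-\partial_t\tilde{U}_t^{\theta}-\mu_t^{\theta\,\top}\nabla_x\tilde{U}_t^{\theta}$ is the integrand of $A_T$. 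Next, I would verify directly that the ansatz $\rho_t(q,p)=e^{-\mathcal{H}_t^{\theta}(q,p)}$ satisfies this PDE, decomposing $\mathcal{L}_t^{\ast}$ into three pieces: (i) the underdamped Langevin generator acting through $p^x,p^\xi$ with damping $\varepsilon_t^x,\varepsilon_t^\xi$, which annihilates $\rho_t$ pointwise because $\rho_t$ is precisely the Gibbs measure of $\mathcal{H}_t^{\theta}$ (this is the standard stationarity of underdamped Langevin, extended to the non-separable case); (ii) the control drift $\mu_t^{\theta}\partial_x$, contributing $(-\nabla_x\!\cdot\mu_t^{\theta}+\mu_t^{\theta\,\top}\nabla_x\tilde{U}_t^{\theta})\rho_t$ from $-\nabla_x\!\cdot(\mu_t^{\theta}\rho_t)$; and (iii) the explicit time derivative $\partial_t\rho_t=-\partial_t\tilde{U}_t^{\theta}\,\rho_t$, since $K$ has no explicit $t$-dependence. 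Summing (i)--(iii) against the source term $\dot{A}_t\rho_t$ produces pointwise cancellation. Standard uniqueness for this (parabolic, linear, first-order-in-time) equation then forces $\nu_t(q,p)=c\,e^{-\mathcal{H}_t^{\theta}(q,p)}$ for all $t$, with $c$ determined by matching at $t=0$: since $\nu_0=\pi_0^{\dagger}$, the proportionality amounts to dividing by the full normalizer of $e^{-\mathcal{H}_0^{\theta}}$.

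Both claims then follow from this identification. Integrating $p^x,p^\xi$ out of $\nu_T$ and noting that the momentum normalizers depend on $(q,p)$ only through $\beta(\xi)$ --- which is \emph{time-independent} along the trajectory once we integrate at fixed $\xi_T=\xi$ --- gives $\nu_T^{(x,\xi)}\propto\pi_T^{\theta}(x,\xi)$. Averaging $h$ against the numerator and denominator yields the first identity. Taking $h\equiv 1$ in the denominator and integrating over all $(q,p)$ collapses the momentum normalizers identically on the two sides, leaving $\EE_{Q\sim\PP_T}[\exp(A_T(Q))]=\int e^{-\tilde{U}_T^{\theta}}\big/\int e^{-\tilde{U}_0^{\theta}}=\exp(F_0-F_T)$ in the convention for $F_t$ adopted in the theorem statement, recovering \eqref{eq:jarzynski}.

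The main obstacle I expect is step (i): confirming that the non-separable kinetic term $K(q,p)=\tfrac{\beta(\xi)}{2M_x}\|p^x\|^2+\tfrac{1}{2M_\xi}\|p^\xi\|^2$ together with the position-and-temperature-coupled Hamiltonian flow still leaves $e^{-\mathcal{H}_t^{\theta}}$ invariant under the pure-Langevin part of $\mathcal{L}_t^{\ast}$. One must carefully track the extra cross-terms arising from $\partial_\xi K$ and $\nabla_\xi\mathcal{H}_t^{\theta}$, verifying that the Hamiltonian conservative part contributes $-\nabla_p\mathcal{H}\cdot\nabla_q\rho_t+\nabla_q\mathcal{H}\cdot\nabla_p\rho_t=0$ on $\rho_t=e^{-\mathcal{H}}$, and that the Ornstein--Uhlenbeck-type damping-plus-noise terms applied to $p^x$ and $p^\xi$ annihilate their respective marginals. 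If this invariance holds --- which it should, because the Hamiltonian has been deliberately engineered so that \eqref{eq:pi_dagger} is its Gibbs measure --- the algebra of the three pieces closes and the result follows without further modification of $A_T$.
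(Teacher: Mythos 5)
Your Feynman--Kac route is genuinely different from the paper's. The paper argues at the level of path measures: it first computes the Radon--Nikodym derivative between the forward CTDS process and a corresponding reverse-time process (Theorem~\ref{appendix:thm:rnd}, via a Girsanov theorem adapted to the singular diffusion matrix using forward and backward reference processes), derives from it a controlled Crooks fluctuation theorem for non-separable Hamiltonians (Theorem~\ref{appendix:thm:crooks}), and finally obtains Jarzynski (Corollary~\ref{appendix:thm:jarzynski}) by taking expectations. Your approach instead works at the density level: you identify the exponentially tilted occupation density $\nu_t$ as the solution of a Kolmogorov forward equation with multiplicative source $\dot A_t$, and verify pointwise that the Gibbs ansatz $e^{-\mathcal{H}_t^\theta}$ solves it. The decomposition you propose does close: the Hamiltonian drift is divergence-free and conserves $\mathcal{H}_t^\theta$ (the $\beta(\xi)$-dependence of $K$ does not spoil either, since $\nabla_\xi\cdot\nabla_{p^\xi}K=0$ and $\nabla_{p^x}\mathcal{H}=\nabla_{p^x}K$), the OU friction-plus-noise blocks on $p^x$ and $p^\xi$ annihilate the Gibbs factor, and the control drift contributes $(-\nabla_x\!\cdot\mu_t^\theta+\mu_t^{\theta\top}\nabla_x\tilde U_t^\theta)\rho_t$, which cancels against $\partial_t\rho_t=-\partial_t\tilde U_t^\theta\,\rho_t$ and the source $\dot A_t\rho_t$. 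What your argument buys is elementariness: no backward It\^o integrals, no pseudo-inverse Girsanov with degenerate noise, and no reference-process trick. What it gives up is the pathwise Crooks identity, which the paper records as a standalone result and uses as the structural core of the argument.

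Two caveats, which you should be aware of even though the paper itself glosses over them. First, the underdamped generator is hypoelliptic (noise only in $p^x,p^\xi$), so ``standard parabolic uniqueness'' is not literally applicable; one needs a hypoellipticity/H\"ormander argument to justify uniqueness of the tilted forward equation. Second, your momentum-marginalization step hides a $\beta(\xi)^{-d/2}$ factor: integrating $e^{-\mathcal{H}_t^\theta}$ over $p^x$ produces $e^{-\tilde U_t^\theta(x,\xi)}\bigl(2\pi M_x/\beta(\xi)\bigr)^{d/2}$, so the $(x,\xi)$-marginal of $e^{-\mathcal{H}_t^\theta}$ is $\propto e^{-\tilde U_t^\theta}\beta(\xi)^{-d/2}$ rather than $\propto e^{-\tilde U_t^\theta}$. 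Equation~\ref{eq:pi_dagger} asserts both the product-of-normalized-Gaussians form and the proportionality to $e^{-\mathcal{H}_t^\theta}$, which are not mutually consistent; your sentence ``gives $\nu_T^{(x,\xi)}\propto\pi_T^\theta(x,\xi)$'' inherits the same imprecision. Since the $\beta^{-d/2}$ factor is time-independent, it cancels in the free-energy ratio, so the scalar identity $\EE[\exp(A_T)]=\exp(F_0-F_T)$ survives unchanged; the reweighting identity for general $h$ strictly speaking targets the $(x,\xi)$-marginal of $e^{-\mathcal{H}_T^\theta}$, not of $e^{-\tilde U_T^\theta}$.
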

\end{tcolorbox}
We note that \cref{thm:ct_jarzynski} closely resembles recently established controlled Jarzynski results from \citep{cmcd, nets}, that it is in fact a corollary of a more general controlled variant of the Crooks fluctuation theorem as established in \citep{vaikuntanathan2008escorted, cmcd, zhong_asymmetric}, and that this resemblance is due entirely to the CTDS construction as an annealed Langevin dynamics. We present a self-contained proof of \cref{thm:ct_jarzynski} by way of an analogous Crooks identity in \cref{appendix:crooks_and_jarzynski_section}. We conclude by demonstrating how \cref{thm:ct_jarzynski} can be used to reweigh samples from the forward process \cref{eq:ctds} to match the joint density $\pi_t^\theta(x,\xi)$ defined in \cref{eq:xz_theta_joint_density}. This also allows us to reweight the loss and up-weigh ``important'' samples during training. Observe that by \cref{thm:ct_jarzynski}, it follows that

\begin{equation}
    \label{eq:reweighted_pinn}
\begin{aligned}
    \mathcal{L}^{\text{MT}}_{\pinn}(F,\mu; \pi^\theta) &=  \int_{\mathcal{T}} \frac{\mathbb{E}_{Q \sim \PP_t}\left[\lvert  \partial_t F_t - \partial_t U_t + \nabla_x \cdot \mu_t - (\nabla_x U_t)^T \mu_t \rvert^2\exp(A_t(Q))\right]}{\mathbb{E}_{Q \sim \PP_t} \left[\exp(A_t(Q))\right]} \d t.
\end{aligned}
\end{equation}
\begin{wrapfigure}[16]{r}{0.5\textwidth}
    \centering
    \includegraphics[width=\linewidth]{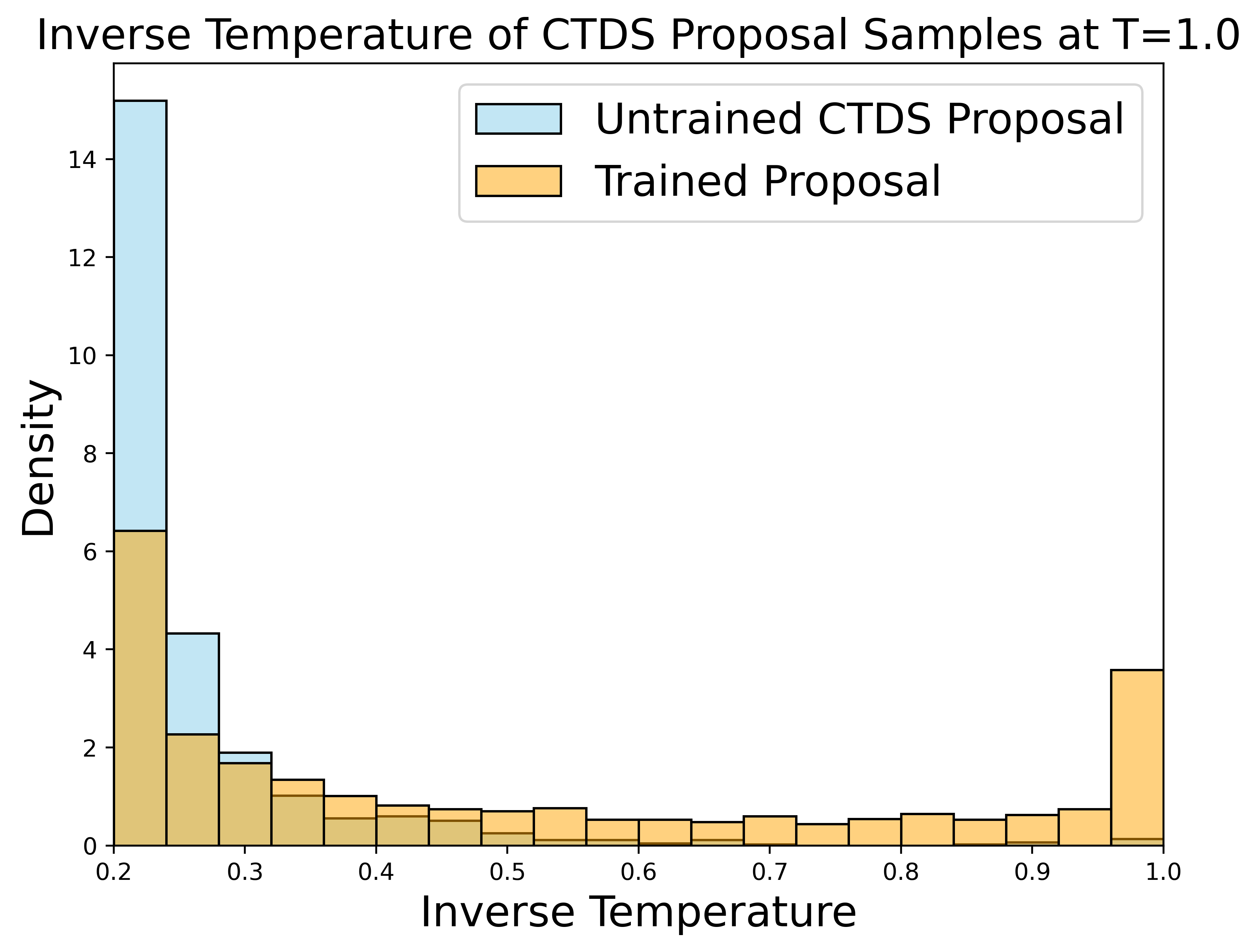}
    \vspace{-0.6cm}
    \caption{Distribution of sample inverse temperatures at $t=1.0$ obtained by simulating \cref{eq:ctds} before and after training for the non-Jarzynski-reweighted CTDS proposal as in \cref{fab_experiment}.}
    \label{fig:temperature_histogram}
\end{wrapfigure}

Details can be found in \cref{appendix:ct_specialization}. We conclude from \cref{eq:reweighted_pinn} that we may reweight samples from the forward process \cref{eq:ctds} so as obtain the proposal $\pi^\theta_t(x,\xi)$, which as we recall from \cref{eq:xz_theta_joint_density}, converges to $\pi_t^{\xi}(x)\pi_t(\xi) = \pi_t^{\xi}(x)e^{-\psi'(\xi)}$ as $F_t^{\theta}(\xi) \to F_t(\xi)$. At the start of training, or when $F_t^\theta(\xi) \approx F_t(\xi)$ otherwise fails to hold, it is observed that samples tend to ``pool'' at either higher or lower temperatures (depending on the target). This effect diminishes as the quality of the free energy approximation increases; see \cref{fig:temperature_histogram}. It should also be noted that since \cref{thm:ct_jarzynski} reweights toward the joint distribution $\pi_t^\theta(x, \xi)$, a poor estimate of the free energy will bias the reweighting toward particular values of $\xi$.

\section{Experiments}

\subsection{40-Mode Gaussian Mixture} 
\label{fab_experiment}
To investigate the benefit of the CTDS proposal, we compare the performance of samplers trained with various choices of proposal on a modified version challenging 40-mode Gaussian mixture from \citep{fab}.\footnote{We take the standard deviation of the isotropic mixture components to be $\sigma = 0.25$.} We emphasize upfront, and to the strongest degree possible, that great care must be taken in comparing the results we report to existing work. Different works consider different density paths and various degrees of complexity for this problem. For example, learning a sampler for such a target \emph{can be made artificially easier by e.g., increasing the scale of the source distribution $\pi_0$}. In this experiment, we choose to take our source distribution $\pi_0 = \mathcal{N}(0,5.0I_2)$, thereby ensuring that modes discovered during training are done so on account of the proposal, rather than an advantageously chosen initialization. As our density path, we take the learned path given in \cref{eq:learned_path} (and for CTDS, the multi-temperature analog given by \cref{eq:learned_continuum}).

\paragraph{Training.} As a baseline, we consider the proposal obtained by simulating the control by itself with no Langevin dynamics, as in \citep{boltzmann_interpolation}, and which we label as the reference proposal. We additionally consider proposals obtained from the controlled overdamped (\cref{eq:controlled_overdamped_langevin}, with $\varepsilon_t = 50.0$) and controlled underdamped/inertial (\cref{eq:controlled_underdamped_langevin}, with $(\gamma_t,\varepsilon_t = 50.0, 2.0)$) Langevin annealing dynamics as in \citep{nets}, which we train with and without use of the Jarzynski equality. Finally, we consider our CTDS proposal (\cref{eq:ctds}, with $(\gamma_t^x,\gamma_t^{\xi}, \varepsilon_t^x, \varepsilon_t^{\xi}) = (50.0, 2.0, 5.0, 2.0)$) both with and without reweighting with the Jarzynski equality (see \cref{subsec:crooks_and_jarzynski}). We train using a replay buffer, re-sampling once per epoch for 1250 epochs for a total of 125000 training iterations at which the PINN loss is evaluated at 6250 randomly sampled elements of the buffer. The control, learnable density path, and free energy are all parameterized as simple MLPs. Integration is performed with an Euler-Maruyama solver with step size $\Delta t = 0.002$. We additionally utilized Gaussian Fourier features as in \cite{tancik2020fourierfeaturesletnetworks} to encode spatial, temporal, and temperature variables, and which we found to improve training stability and sampler performance. More details can be found in \cref{appendix:experiment}

\begin{figure}[t!]
        \centering
        \renewcommand{\arraystretch}{1.0}
        \begin{tabular}{lccccc}
            \toprule
            \multicolumn{4}{c}{$40$-mode Gaussian Mixture} \\
            \midrule
            \textbf{Proposal}  & $\mathcal{W}_2$ $\downarrow$  & \textbf{ELBO} $\uparrow$ & \textbf{EUBO} $\downarrow$ \\
            \midrule
            Baseline   & $23.90 \pm 0.32$ & $-2.11 \pm 0.00$ & $23.78 \pm 0.34$ \\
            NETS (OD) w/o Jar.   & $22.61 \pm 0.16$ & $-1.65 \pm 0.00$ & $22.27 \pm 0.23$ \\
            NETS (OD)    & $20.11\pm0.23$& $-1.15\pm 0.08$ & $19.77 \pm 0.22$ \\
            NETS (UD) w/o Jar.  & $23.47\pm 0.31$ & $-1.98\pm 0.00$ & $22.00 \pm 0.32$ \\
            NETS (UD) & $21.30 \pm 0.23$ & $-1.35 \pm 0.02$ & $ 20.95 \pm 0.26$ \\
            CTDS w/o Jar. (ours)  & $\mathbf{12.87\pm 0.20}$ & $\mathbf{-0.24 \pm 0.01}$ & $ 19.91 \pm 0.45$ \\
            CTDS (ours)  & $ 14.70 \pm 0.28 $ & $ -0.29 \pm 0.01$ & $\mathbf{15.35 \pm 0.34}$ \\
            \bottomrule
        \end{tabular}
    \caption{Results for samplers trained on the 40-mode Gaussian mixture target as in \cref{fab_experiment}.}
    \label{fig:results_table}
\end{figure}

\paragraph{Evaluation.} Post-training, we sample by simulating the dynamics
\begin{equation}
    \label{eq:experiment_ode}
    \d X_t = \mu_t^{\theta}(X_t) \d t, \quad \quad X_0 \sim \pi_0 \triangleq \mathcal{N}(0, 5.0I_2).
\end{equation}
In the CTDS case, we take $\mu_t^{\theta}(X_t) = \mu_t^{\theta}(X_t, \beta(\xi) = 1.0)$. We report mean and std. values for both 2-Wasserstein ($\mathcal{W}_2$), evidence lower bound (ELBO), and evidence upper bound (EUBO), for each using $N = 2500$ samples for each of $10$ trials, as can be found in \cref{fig:results_table}. We observe that the CTDS outperform their baseline, overdamped and underdamped counterparts, achieving both lower $\mathcal{W}_2$ values and ELBOs, and that the Jarzynski-reweighted CTDS achieves the lowest EUBO. Curiously, while reweighted NETS based proposals seem to uniformly outperform non-reweighted counterparts, the same trend does not hold for CTDS, and in particular the non-reweighted CTDS proposal achieves both lower $\mathcal{W}_2$ and higher ELBO. We hypothesize that this is due in part to the effects discussed in \cref{subsec:crooks_and_jarzynski} (and visualized in \cref{fig:temperature_histogram}), by which an initially poor free energy approximation biases both samples and importance weights are initially biased towards high temperatures. A visualization comparing the baseline, reweighted overdamped NETS, and reweighted CTDS is provided in \cref{fig:experiment_scatter}.

\section{Conclusion}
In this work, we have proposed \textit{continuously tempered diffusion samplers} (CTDS), a novel family of controlled Langevin annealing processes based on continuous tempering techniques from e.g., molecular dynamics. When combined with PINN-based sampler training, CTDS-based proposals outperform existing proposals based on overdamped and underdamped Langevin annealing on the challenging 40-mode Gaussian mixture from \citet{fab}. We leave it to future work to extend CTDS to higher dimensional or otherwise complex targets and other scientific applications.

\section{Acknowledgements}
We thank Jiajun He, Yuanqi Du, and Michael Albergo for helpful discussions regarding PINN-based sampler training over the course of this project. We additionally acknowledge support from the Machine Learning for Pharmaceutical Discovery and Synthesis (MLPDS) consortium, the DTRA Discovery of Medical Countermeasures Against New and Emerging (DOMANE) threats program, and the NSF Expeditions grant (award 1918839) Understanding the World Through Code

\bibliography{iclr2025_conference}
\bibliographystyle{iclr2025_conference}

\appendix
\section{A Brief Overview of PINN Objectives}
\label{appendix:pinn}
We briefly derive the PINN objective from \cref{eq:pinn}. Recall the the continuity equation 
\begin{equation}
    \partial_t \pi_t(x_t) = - \nabla \cdot \left[\mu_t^{\theta}(x_t)\pi_t(x_t)\right],\quad\quad\quad \forall (x_t,t) \in \mathbb{R}^d \times \mathcal{T},
\end{equation}
from \cref{eq:continuity_eq}. Dividing both sides by $\pi_t(x_t)$ yields
\begin{equation}
    \partial_t \log \pi_t(x_t) = - \nabla \mu_t^{\theta}(x_t) - \mu_t^{\theta}(x_t)^T \nabla \log \pi_t(x_t).
\end{equation}
Finally, plugging. in $\log \pi_t(x_t) = -U_t(x) + F_t$ and rearranging yields
\begin{equation}
    \partial_t F_t - \partial_t U_t(x) + \nabla \mu_t^{\theta}(x_t) - (\nabla_x U_t(x))^T \mu_t^\theta(x) = 0
\end{equation}
from which we obtain the PINN objective
\begin{equation}
    \mathcal{L}_{\text{PINN}}(\mu_t^{\theta}; \tilde{\pi}_t) \triangleq \int_{\mathcal{T} \times \mathbb{R}^d}\lvert  \partial_t F_t - \partial_t U_t(x) + \nabla_x \cdot \mu_t^\theta(x) - (\nabla_x U_t(x))^T \mu_t^\theta(x) \rvert^2\,\tilde{\pi}_t(x) \d x \d t
\end{equation}
for fully-supported \emph{proposal distribution} $\tilde{\pi}_t$ \citep{pinn, pinn_workshop, expert_guide_pinn}. The free energy $F_t$ however is generally unknown, and thus must be estimated \citep{lfs, path_guided_particle_sampler}, or jointly-learned with the control \cite{boltzmann_interpolation, nets}. In line with this second approach, subsequent work has established a modified PINN objective by which the free energy may be simultaneously learned along with the control, viz.,
\begin{equation}
    \mathcal{L}_{\text{PINN}}(\mu_t^{\theta},F_t^{\theta}; \tilde{\pi}_t) \triangleq \int_{\mathcal{T} \times \mathbb{R}^d}\lvert  \partial_t F^{\theta}_t - \partial_t U_t(x) + \nabla_x \cdot \mu_t^\theta(x) - (\nabla_x U_t(x))^T \mu_t^\theta(x) \rvert^2\,\tilde{\pi}_t(x) \d x \d t
\end{equation}
for which the true free energy $F_t$ can be shown to be the unique minimizer \cite{boltzmann_interpolation, nets}. To better facilitate the learning of the free energy, past work has proposed a curriculum training procedure in which the domain of integration along the time dimension is slowly annealing from zero to one, so as to initially learn the free energy on some smaller interval before all of $\mathcal{T}$.

\section{Extended Details on Continuously Tempered Diffusion Samplers}
In this section, we elaborate an additional design choices for continuously tempered diffusion samplers, including the choice of reparameterization, the use of confining and biasing potentials for temperature. Additionally, we provide a proof of \cref{thm:ct_pinn_characterization}.
\subsection{Temperature Reparameterization Details}
\label{appendix:reparameterization}
Let us recall the definition of a density continuum over $(t,\beta) \in \mathcal{T} \times \mathcal{B}$ frorm \cref{eq:density_continuum}, given by 
 \begin{equation}
     \hat{\pi}_t^{\beta}(x) \triangleq e^{-U_t^{\beta}(x)},\quad \quad U_t^{\beta}(x) \propto \beta U_t^1(x) \quad \quad \quad \forall (t,b) \in \mathcal{T} \times \mathcal{B},
 \end{equation}
 where we have defined $\mathcal{B} \triangleq [\beta_{\text{min}}, 1]$ as the range of inverse-temperatures $\beta$ of interest. Our goal of constructing an annealed Langevin dynamics over such a continuum necessitates a joint density over both $x$ and $\beta$, and in turn, an associated marginal distribution over $\beta$. Working directly over $\beta$ is thus undesirable for two reasons: First, assuming that such a marginal over $\beta$ is continuous, samples from this marginal would almost surely not be equal to either $\beta_{\text{min}}$ or $1.0$, the two temperatures of particular relevance. This is problematic because it would mean that any proposal constructed with e.g., Langevin dynamics over such a joint density would provide little training signal at the desired temperature of $1.0$, nor would it spend much exploring at $t = \beta_{\text{min}}$. Secondly, we would like to constrain the temperature range to only the interval $[\beta_{\text{min}}, 1]$, and enforcing these boundary conditions directly presents complications. We therefore follow the lead of \cite{continuous_tempering} by reparameterizing via
\begin{equation}
    \label{appendix:eq:reparam}
    \beta(\xi) \triangleq 
    \begin{cases} 
    1 & \text{if}\,\, \abs{\xi} < \Delta\\
    1-(1-\beta_{\text{min}})\left[3\left(\frac{\abs{\xi} - \Delta}{\Delta' - \Delta}\right)^2 - 2\left(\frac{\abs{\xi} - \Delta}{\Delta' - \Delta}\right)^3\right] & \text{if}\,\, \Delta \le \abs{\xi} \le \Delta'\\
    \beta_{\text{min}} & \text{if}\,\, \abs{\xi} > \Delta'   
    \end{cases}
\end{equation}
for all $\xi \in \Xi \triangleq \mathbb{R}$. A visualization of \cref{appendix:eq:reparam} is given in \cref{appendix:fig:reparam}.
 
 \begin{figure}[!t]
     \centering
     \includegraphics[width=\textwidth]{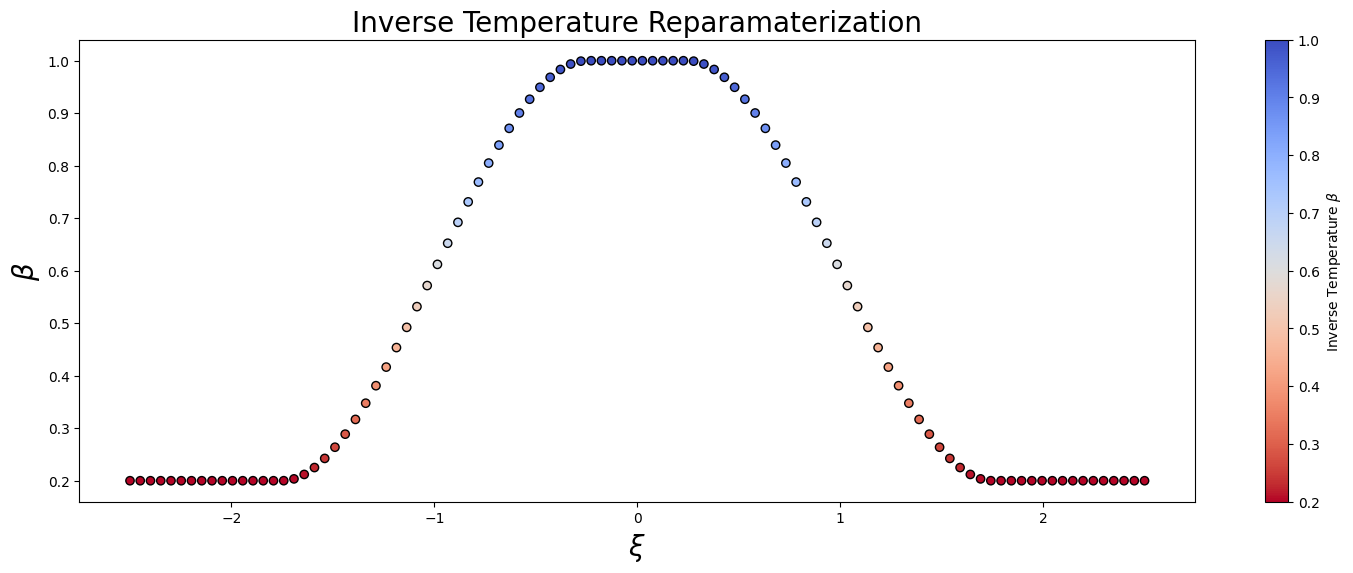}
     \caption{An illustration of the inverse temperature reparameterization given in \cref{appendix:eq:reparam}.}
     \label{appendix:fig:reparam}
 \end{figure}

\subsection{Biasing and Confining Potentials}
\label{appendix:biasing_and_confining_potentials}
Recall the joint density $\hat{\pi}_t(x,\xi)$ given in \cref{eq:xz_theta_joint_density} as
\begin{equation}
    \hat{\pi}_t^{\theta}(x, \xi) = e^{-U_t^{\xi}(x) + F_t^\theta(\xi) + \psi_t'(\xi)} = e^{-\tilde{U}_t^{\theta}(x,\xi)}.
\end{equation}
In practice, and to encourage the $\xi$-component to stay close to the critical interval $[-\Delta', \Delta']$ (see \cref{appendix:eq:reparam}), we may introduce choose $\psi_t'(\xi)$ so as to include a \textit{confining potential} $\psi^{\text{conf}}(\xi)$. In this paper, we take
\begin{equation}
    \label{appendix:eq:confining}
    \psi^{\text{conf}}(\xi) = 
    \begin{cases} 
    \eta(\xi + \tilde{\Delta})^2 & \text{if}\,\, \xi < - \tilde{\Delta}\\
    0 & \text{if}\,\, -\tilde{\Delta} \le \xi \le \tilde{\Delta}\\
    \eta(\xi - \tilde{\Delta})^2 & \text{if}\,\, \xi > \tilde{\Delta}
    \end{cases}
\end{equation}
where $\tilde{\Delta}$ (chosen to be slightly larger than $\Delta'$ from \cref{appendix:eq:reparam}) and $\eta > 0$ (denoting the sharpness of the confining potential) are hyperparameters. Additionally, one may consider alleviating a poorly initialized $F_t^{\theta}$ using an additional biasing force term $\psi_t^{\text{bias}}$ updated continuously throughout training to balance out the measured marginal distribution over $\xi$, so as to obtain the general form
\begin{equation}
    \psi_t'(\xi) = \psi^{\text{conf}}(\xi) + \psi_t^{\text{bias}}(\xi).
\end{equation}
In practice, we find that utilizing a biasing potential $\psi_t^{\text{bias}}(\xi)$ to be cumbersome and to provide little benefit over simpler solutions such as taking e.g., a larger value of $\sigma_{\xi}^2$ in \cref{eq:pi_dagger}.
\subsection{Characterizing Minimizers of the Multi-Temperature PINN Loss}
\label{appendix:mt_pinn_existence}
\begin{proof}[Proof of \cref{thm:ct_pinn_characterization}]
    We apply the basic argument of \citep{boltzmann_interpolation} in a $\xi$-pointwise fashion. Let us suppose that $(\mu^{\star}, F^{\star})$ satisfies $\mathcal{L}_{\pinn}(F^{\star},\mu^{\star}; \tilde{\pi}) = 0$. Then, for all $(x,\xi,t) \in \mathbb{R}^d \times \Xi \times \mathcal{T}$ (except perhaps on a set of measure zero), we must have 
    \begin{equation}
        \label{appendix:eq:pinn_satisfied}
        \partial_t F^{\star}_t(\xi) - \partial_t U_t^{\xi}(x) - \nabla_x \cdot \mu^{\star}_t(x,\xi) + (\nabla_x U_t^{\xi})^T \mu^{\star}_t(x,\xi) = 0.
    \end{equation}
    Now, observe that the true free energy $F_t(\xi)$ satisfies
    \begin{equation}
        \label{eq:free_energy_rearrange}
        \partial_t F_t(\xi) = - \partial_t \log Z_t(\xi) = \frac{-\int_{\mathbb{R}^d} \partial_t e^{-U_t^{\xi}(x)} \d x}{Z_t(\xi)} = \frac{\int_{\mathbb{R}^d} (\partial_t U_t^{\xi}(x))e^{-U_t^{\xi}(x)} \d x}{Z_t(\xi)} .
    \end{equation}
    However, by \cref{appendix:eq:pinn_satisfied}, we have that 
    \begin{equation}
        \label{appendix:eq:pinn_rearrange}
        \begin{aligned}
        \partial_t U_t^{\xi}(x,\xi) &= \partial_t F^{\star}_t(\xi) - \nabla_x \cdot \mu^{\star}_t(x,\xi) + (\nabla_x U_t^{\xi})^T \mu^{\star}_t(x,\xi)\\
        &= \partial_t F^{\star}_t(\xi) - e^{U_t(x)} \nabla_x \left[\mu_t^{\star}(x,\xi)e^{-U_t^{\xi}(x)}\right].
        \end{aligned}
    \end{equation}
    Plugging \cref{appendix:eq:pinn_rearrange} into \cref{eq:free_energy_rearrange} yields
    \begin{align*}
        \partial_t F_t(\xi) &= \frac{\int_{\mathbb{R}^d} \partial_t F_t^{\star}(\xi)e^{-U_t^{\xi}(x)}\d x}{Z_t(\xi)} - \frac{\int_{\mathbb{R}^d} \nabla_x \left[\mu_t^{\star}(x)e^{-U_t^{\xi}(x)}\right] \d x}{Z_t(\xi)}\\
        &= \frac{\int_{\mathbb{R}^d} \partial_t F_t^{\star}(\xi)e^{-U_t^{\xi}(x)}\d x}{Z_t(\xi)}\\
        &= \partial_t F_t^{\star}(\xi),
    \end{align*}
    where the second equality follows from the divergence theorem and the third from the definition of $Z_t(\xi)$. We conclude that $\partial_t F_t^{\star}(\xi) = \partial_t F_t(\xi)$ on $\mathcal{T}$, and together with the boundary condition $F_0^{\star}(\xi) = \partial_0 F_t(\xi)$, this is enough to prove the desired result.
\end{proof}

\section{Controlled Crooks and Jarzynski Results for Continuously Tempered Dynamics}
\label{appendix:crooks_and_jarzynski_section}
In this section, we establish a controlled Crooks fluctuation theorem for a controlled underdamped Langevin dynamics given by a \textit{non-separable Hamiltonian}, and as a corollary obtain an associated controlled Jarzynski equality. Our results partially generalize existing work for \textit{separable} Hamiltonians \citep{vaikuntanathan2008escorted, cmcd, zhong_asymmetric, nets}. 
\subsection{Background on Stochastic Integration}
\label{appendix:stochastic_background}

\begin{figure}[!h]
    \centering
    \renewcommand{\arraystretch}{1.2}
    \begin{tabular}{|c|>{\centering\arraybackslash}p{2cm}|>{\centering\arraybackslash}p{6cm}|}
        \multicolumn{3}{c}{} \\
        \hline
        Integral Name & Notation  & Discretization \\
        \hline 
        Forward Itô & $\int_0^T X_t \d Y_t$ & $\sum_i X_{t_i}(Y_{t_{i+1}} - Y_{t_i})$\\
        \hline
        Backward Itô & $\int_0^T X_t \bd Y_t$ & $\sum_i X_{t_{i+1}}(Y_{t_{i+1}} - Y_{t_i})$\\
        \hline 
        Stratonovich & $\int_0^T X_t \circ \d Y_t$ & $\sum_i \frac{1}{2}(X_{t_{i+1}} + X_{t_i})(Y_{t_{i+1}} - Y_{t_i})$\\
        \hline
    \end{tabular}
    \caption{Three flavors of stochastic integration.}
    \label{table:stochastic_integrals}
\end{figure}

In the remainder of this section, we'll make heavy use of forward and backward Itô integrals, as well as the Stratonovich integral. We largely follow the notation of \citep{cmcd}, and refer thereto for a more elaborate and technical discussion. For now, we summarize the three important Stochastic integrals - forward and backward Itô, and Stratonovich - in \cref{table:stochastic_integrals}, and provide some useful identities which relate this integrals, and which we shall make heavy use of. In particular, let us consider the prototypical forward SDE
\begin{equation}
    \label{eq:proto_forward}
    \d \fX_t = \mu_t(\fX_t) \d t + \sigma_t \d W_t,
\end{equation}
as shorthand for the forward Itô integral $\fX_t = \fX_0 + \int_0^t \mu_s(\fX_s)\d s + \int_0^t \sigma_s \d W_s$, and 
\begin{equation}
    \label{eq:proto_backward}
    \d \bX_t = \mu_t(\bX_t) \d t + \sigma_t \bd W_t,
\end{equation}
as shorthand for the backward Itô integral $\bX_t = \bX_T - \int_t^T \mu_s(\bX_s)\d s - \int_t^T \sigma_s \bd W_s$. We conclude with two useful identities which we make heavy use of, and which we provide intuitive, discretized intuition for. We again direct the reader to e.g., \citep{cmcd, kunita} for a more technical treatment. First, as is directly observed from the discretizations shown in \cref{table:stochastic_integrals}, we may relate the forward and backward Itô integrals to a corresponding Stratonovich integral via
\begin{equation}
    \label{eq:stratonovich_identity}
    \int_0^T X_t (\d Y_t + \bd Y_t) = 2 \int_0^T X_t \circ \d Y_t.
\end{equation}
Second, and less intuitively obvious, is the fact that for $X = \fX$ from \cref{eq:proto_forward},
\begin{equation}
    \label{eq:forward_backward_approx}
    \int_0^T \mu_t(X_t) (\bd X_t - \d X_t) = \lim_{\Delta t \to 0} \sum_i (\mu_{t + \Delta t}(X_{t + \Delta t}) - \mu_{t}(X_t))(X_{t + \Delta t} - X_t).
\end{equation}
Writing 
\begin{align}
    \mu_{t + \Delta t}(X_{t + \Delta t}) &= \mu_{t}(X_{t}) + \partial_t  \mu_{t}(X_{t}) + \nabla  \mu_{t}(X_{t})\d X_t\\
    &= \mu_{t}(X_{t}) + \partial_t  \mu_{t}(X_{t}) + \nabla  \mu_{t}(X_{t})\left[\mu_t(\fX_t) \d t + \sigma_t \d W_t\right],
\end{align}
plugging this into \cref{eq:forward_backward_approx}, and allowing the contribution of all $o(\Delta t)$ terms to vanish, we obtain  
\begin{equation}
    \label{eq:divergence_identity}
    \int_0^T \mu_t(X_t) (\bd X_t - \d X_t) = \int_0^T \sigma_t^2 \nabla \cdot \mu_t(X_t) \d t.
\end{equation}

\subsection{Radon-Nikodym Derivatives of Forward and Backward Langevin Dynamics}
\label{appendix:crooks_proofs}
Let us start by considering some time-dependent \textit{non-separable} Hamiltonian of the form 
\begin{equation}
    \label{eq:hamiltonian}
    \mathcal{H}_t(q_t,p_t) \triangleq U_t(q_t) + K(q_t, p_t).
\end{equation}
Then, we may define the controlled underdamped Langevin dynamics as the forward process  $\{\fX_t\}_{t \in [0,1]}$ given by
\begin{equation}
    \label{appendix:eq:forward_dynamics}
\begin{aligned}
    \d \fX_t = 
    \begin{bmatrix}
        \d q_t \\
        \d p_t 
    \end{bmatrix} = 
    \underbrace{
    \begin{bmatrix}
        \mu_t (q_t) \\
        0
    \end{bmatrix}\, \d t
    }_{\text{control}}
    + 
    \underbrace{
    \begin{bmatrix}
        \Gamma_t\nabla_p \mathcal{H}(q_t, p_t) \\
        - \Gamma_t\nabla_q \mathcal{H}(q_t, p_t)
    \end{bmatrix}\, \d t
    }_{\text{Hamiltonian dynamics}}
    +
    \underbrace{
    \begin{bmatrix}
        0 \\
        -\Gamma_tE_t\partial_p K(q_t, p_t)
    \end{bmatrix}\, \d t
    + 
    \begin{bmatrix}
        0 \\
        \sqrt{2 \Gamma_t E_T}
    \end{bmatrix} \d W_t
    }_{\text{Langevin dynamics}}.
    \end{aligned}
\end{equation}
where $X_0 \sim \pi_0 \propto e^{- \mathcal{H}_0}$ and where $\{W_t\}$ is a standard Brownian motion on $\mathbb{R}^d$. Here, we have defined the \emph{scaling} and \emph{damping coefficients}
\begin{equation}
    \Gamma_t \triangleq 
    \begin{bmatrix} 
    \gamma_t^1 & \hdots & 0 \\
    \vdots & \ddots & \vdots \\
    0 & \hdots & \gamma_t^d
    \end{bmatrix} \quad \quad \text{and} \quad \quad E_t \triangleq 
    \begin{bmatrix} 
    \varepsilon_t^1 & \hdots & 0 \\
    \vdots & \ddots & \vdots \\
    0 & \hdots & \varepsilon_t^d
    \end{bmatrix}.
\end{equation}

Then, for $T \in (0,1]$ may similarly consider the backward process $\{\bX_t\}_{t \in [0,1]}$.

\begin{equation}
    \label{appendix:eq:backward_dynamics}
\begin{aligned}
    \d \bX_t = 
    \begin{bmatrix}
        \d q_t \\
        \d p_t 
    \end{bmatrix} = 
    \begin{bmatrix}
        \mu_t (q_t) \\
        0
    \end{bmatrix}\, \d t
    +
    \begin{bmatrix}
        \Gamma_t\nabla_p \mathcal{H}(q_t, p_t) \\
        - \Gamma_t\nabla_q \mathcal{H}(q_t, p_t)
    \end{bmatrix}\, \d t
    -
    \begin{bmatrix}
        0 \\
        -\Gamma_tE_t\partial_p K(q_t, p_t)
    \end{bmatrix}\, \d t
    + 
    \begin{bmatrix}
        0 \\
        \sqrt{2 \Gamma_t E_T}
    \end{bmatrix} \bd W_t,
    \end{aligned}
\end{equation}
where $\bX_T \sim \pi_T \propto e^{- \mathcal{H}_T}$ and where $\bd$ denotes the backward Itô differential (see \cref{appendix:stochastic_background}). Let $\PP_T$ and $\QQ_T$ denote the path measures of $\{\fX_t\}_{t \in [0,T]}$ and $\{\bX_t\}_{t \in [0,T]}$. In what follows, we will denote by $Q_t$ and $P_t$ the $q$ and $p$ components respectively of the process $X_t$, and when necessary, by $\bQ_t$ and $\bP_t$ the respective $q$ and $p$ components of the backward process $\bX_t$. In this section we will establish the following result characterizing the Radon-Nikodym derivative (RND) of the forward and backward processes in \cref{appendix:eq:forward_dynamics} and \cref{appendix:eq:backward_dynamics}.

\begin{theorem}[RND of Forward and Backward Processes]
\label{appendix:thm:rnd}
The path measures $\PP_T$ and $\QQ_T$ are absolutely continuous with respect to one another, and for a sufficiently well-behaved proces $\{X_t\} = \{(Q_t, P_t\}$, the Radon-Nikodym derivative $\frac{\d \QQ_T}{\d \PP_T}(X)$ is given by 
\begin{align}
    \log \frac{\d \QQ_T}{\d \PP_T}(X) &=\log \pi_T(X_T) - \log \pi_0 (X_0) \\
    &+  \int_0^T \nabla_p K(q_t, p_t) \circ \d P_t + \int_0^T \nabla_q \cdot \mu_t(q_t) + (\Gamma_t \nabla_q \mathcal{H}_t(q_t,p_t))^T\nabla_p K(q_t,p_t) \d t.
    \label{appendix:eq:rnd}
\end{align}
\end{theorem}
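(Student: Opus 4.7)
The plan is to decompose $\log(\d \QQ_T / \d \PP_T)$ into three pieces: a boundary contribution from the differing initial distributions, a Girsanov-type change-of-measure term arising from the opposite signs on the Langevin damping in the $p$-component, and a correction term from the mismatch between forward- and backward-Itô conventions. The boundary piece is immediate: since $\PP_T$ is initialized at $\pi_0$ at time $0$ and $\QQ_T$ is anchored at $\pi_T$ at time $T$, the log-ratio of endpoint densities contributes exactly $\log \pi_T(X_T) - \log \pi_0(X_0)$.

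For the dynamics contribution, I would first note that the $q$-component carries no Brownian motion and has identical drift under both measures, so all stochastic content is confined to the $p$-component. There, the forward and backward SDEs share the same diffusion $\sqrt{2\Gamma_t E_t}$ and Hamiltonian drift $-\Gamma_t \nabla_q \mathcal{H}_t$, and differ only in (i) the sign of $\Gamma_t E_t \partial_p K$ and (ii) the type of Itô increment. A discretize-and-compare argument—equivalently, Girsanov applied after rewriting both processes in a common convention—produces a stochastic integral of $\nabla_p K(q_t, p_t)$ against an increment of the form $\d P_t + \bd P_t$. By identity \cref{eq:stratonovich_identity}, this sum collapses to $2\int_0^T \nabla_p K \circ \d P_t$, accounting for the Stratonovich term appearing in \cref{appendix:eq:rnd}.

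The remaining integrand $\nabla_q\!\cdot\!\mu_t(q_t) + (\Gamma_t \nabla_q \mathcal{H}_t)^T \nabla_p K$ should emerge from applying identity \cref{eq:divergence_identity} along the $q$-component. Since $\d q_t = (\mu_t(q_t) + \Gamma_t \nabla_p \mathcal{H}_t)\,\d t$ contains both the control drift $\mu_t$ and the Hamiltonian drift $\Gamma_t \nabla_p \mathcal{H}_t$, the $(\bd - \d)$-difference integrand that arises in the Itô-to-Stratonovich correction for $\int \nabla_p K \circ \d P_t$ gets paired with $\d q_t$ via the $q$-dependence of $\nabla_p K$. This yields $\nabla_q\!\cdot\!\mu_t$ from the control piece and the cross term $(\Gamma_t \nabla_q \mathcal{H}_t)^T \nabla_p K$ from the Hamiltonian piece.

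The main obstacle is the non-separability of the kinetic energy $K(q_t, p_t)$: because $K$ depends on both position and momentum through the $q$-dependent inverse temperature in the $p^x$ block of \cref{eq:ctds}, the cross derivatives $\nabla_q \nabla_p K$ are nonzero, and the standard separable-Hamiltonian Crooks computation of \citep{vaikuntanathan2008escorted, zhong_asymmetric, nets} does not apply off the shelf. Carefully bookkeeping the stochastic integration-by-parts—in particular tracking which components of $X_t$ are deterministic versus Brownian-driven when invoking \cref{eq:divergence_identity}, and verifying that the quadratic Girsanov exponent closes properly against the $\log \pi_T - \log \pi_0$ boundary piece—is the delicate technical step needed to produce the claimed cross term rather than a cleanly factored one.
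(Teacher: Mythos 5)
Your high-level decomposition (boundary term, Girsanov-type term, Itô-convention correction) is in the right spirit, and you correctly identify the key stochastic-calculus identities \cref{eq:stratonovich_identity} and \cref{eq:divergence_identity} and the non-separability issue. But the proposal as written has two genuine gaps that would block a completed proof.

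First, you cannot apply Girsanov ``directly'' between $\PP_T$ and $\QQ_T$ after a change of convention, because $\PP_T$ is anchored at $\pi_0$ at $t=0$ while $\QQ_T$ is anchored at $\pi_T$ at $t=T$ and is driven by a \emph{backward} It\^{o} differential; Girsanov compares two processes with the same initial law and the same (forward) filtration. Converting $\QQ_T$ into a comparable forward process requires either time-reversing it (which would introduce the intractable score of its marginals) or the device the paper actually uses: introduce forward and backward \emph{reference} processes \cref{appendix:eq:forward_ref_dynamics}, \cref{appendix:eq:backward_ref_dynamics} whose common marginal is Lebesgue measure, so the reference time-reversal has vanishing score and $\PP_T^{\reff} = \QQ_T^{\reff}$. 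The RND is then obtained by cancellation, $\log \frac{\d \QQ_T}{\d \PP_T} = \log \frac{\d \QQ_T}{\d \QQ_T^{\reff}} - \log \frac{\d \PP_T}{\d \PP_T^{\reff}}$. Your outline has no analogue of this step, and so cannot actually produce the $\log \pi_T(X_T) - \log \pi_0(X_0)$ boundary term rigorously; in the paper that term arises from the $\d\pi_T/\d\Lambda$ and $\d\pi_0/\d\Lambda$ factors in the two reference RNDs, not from an a priori ``split off the endpoints'' argument. The singular diffusion also requires the pseudo-inverse form of Girsanov, \cref{appendix:thm:external_girsanov}, which your sketch does not invoke.

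Second, your account of where $\nabla_q\!\cdot\!\mu_t$ and $(\Gamma_t\nabla_q\mathcal{H}_t)^T\nabla_p K$ come from does not hold up. You attribute both to ``applying \cref{eq:divergence_identity} along the $q$-component'' and to the $q$-dependence of $\nabla_p K$. But the $q$-component has no Brownian driving term, so $\d Q_t$ has zero quadratic variation and contributes nothing to an It\^{o}--Stratonovich correction; and the It\^{o}--Stratonovich correction for $\int \nabla_p K \circ \d P_t$ picks up only $\nabla_p\nabla_p K$, not $\nabla_q\nabla_p K$. In the actual derivation, $\nabla_q\!\cdot\!\mu_t$ enters through the divergence-free condition $\nabla_p\!\cdot\!\nu_t = -\nabla_q\!\cdot\!(\mu_t + \Gamma_t\nabla_p\mathcal{H}_t)$ imposed on the reference drift $\nu$, and the cross term $(\Gamma_t\nabla_q\mathcal{H}_t)^T\nabla_p K$ arises from the quadratic part of the Girsanov exponent (the ``blue'' terms \cref{eq:blue_simple}), not from any $q$-derivative of $K$. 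So the missing piece is not merely ``careful bookkeeping'' but the reference-process construction itself, and the mechanism you propose for the two deterministic integrands is the wrong one.
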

One challenging aspect of \cref{appendix:eq:forward_dynamics} (and, in turn, of \cref{appendix:eq:backward_dynamics}) is the fact that we are only injecting noise into the momentum component, so that the driving Brownian motion $W_t$ has dimension only half that of the phase space on which these two processes are defined. Accordingly, we recall the following Girsanov result which accommodates such diffusion processes involving a lower dimensional driving Brownian motion.
\begin{theorem}[Multivariate Extension of Theorem 7.19, \cite{liptser2013statistics}]
\label{appendix:thm:external_girsanov}
Let $T \in [0,1]$ and consider the two diffusion processes $\{Y\}_{t \in [0,T]}, \{Z\}_{t \in [0,T]}$, on $\RR^n$ given by 
\begin{align*}
    \d Y_t &= \phi_t(Y_t)\d t + \Sigma_t \d W_t \\
    \d Z_t &= \psi_t(Z_t)\d t + \Sigma_t \d W_t 
\end{align*}
with $Y_0 = Z_0$ fixed, so that $\mu, \phi, \Sigma$ satisfy mild regularity conditions, where $\{W_t\}$ is a standard Brownian motion on $\RR^k$, and where $\Sigma_t \in \mathbb{R}^{n \times k}$. Additionally, let $\QQ_T^Y$ and $\QQ_T^Z$ denote the respective path measures of $Y$ and $Z$. Then, if 
\begin{equation}
    \int_0^T \left[\phi_t(X_t)^T(\Sigma_t\Sigma_t^T)^{+} \phi_t(X_t) + \psi_t(X_t)^T(\Sigma_t\Sigma_t^T)^{+} \psi_t(X_t)\right]\,\d t < \infty
\end{equation}
both $\QQ_T^Y$ and $\QQ_T^Z$ almost surely, then $\QQ_T^Y \sim \QQ_T^Z$ are absolutely continuous with respect to one another and their Radon-Nikodym derivative is given by
\begin{equation}
    \label{appendix:eq:external_girsanov}
    \frac{\d \QQ_T^Y}{\d \QQ_T^Z}(Z) = 
    \exp \left(
    \int_0^T(\phi_t(z) - \psi_t(z))^T(\Sigma_t\Sigma_t^T)^{+}(\d Z_t - \frac{1}{2}(\phi_t(z) + \psi_t(z)) \d t)\right),
\end{equation}
where $A^+$ denotes the Moore-Penrose pseudoinvere of a matrix $A$.
\end{theorem}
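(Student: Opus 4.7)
The plan is to reduce the degenerate-noise case to the classical (non-degenerate) Girsanov theorem applied directly to the driving Brownian motion $W$, rather than to the processes $Y$ and $Z$ themselves. The key device is to absorb the drift discrepancy $\phi_t-\psi_t$ into a shift of $W$, using the Moore--Penrose pseudoinverse to pick the unique minimum-norm shift that lies in $\text{range}(\Sigma_t^T)$. Concretely, I would first define
\begin{equation*}
    u_t \triangleq \Sigma_t^{+}\bigl(\phi_t(Z_t) - \psi_t(Z_t)\bigr),
\end{equation*}
which by construction lies in $\text{range}(\Sigma_t^T)$. Using the SVD identity $(\Sigma_t^+)^T\Sigma_t^+ = (\Sigma_t\Sigma_t^T)^+$, the stated integrability hypothesis translates directly into $\int_0^T \|u_t\|^2\,\d t < \infty$ a.s., so the Doléans--Dade exponential $M_t := \exp\bigl(\int_0^t u_s^T\,\d W_s - \tfrac{1}{2}\int_0^t \|u_s\|^2\,\d s\bigr)$ is a well-defined positive local martingale, and the mild regularity conditions (Novikov, or linear growth of $\phi,\psi$) promote it to a true martingale.

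Next, I would define $\widetilde{\QQ}$ by $\d\widetilde{\QQ}/\d\QQ_T^Z = M_T$. The standard (non-degenerate) Girsanov theorem applied to $W$ on $\RR^k$ implies that $\widetilde{W}_t := W_t - \int_0^t u_s\,\d s$ is a standard Brownian motion under $\widetilde{\QQ}$. Substituting into the SDE for $Z$ gives
\begin{equation*}
    \d Z_t = [\psi_t(Z_t) + \Sigma_t u_t]\,\d t + \Sigma_t\,\d\widetilde{W}_t,
\end{equation*}
and the projection identity $\Sigma_t\Sigma_t^+(\phi_t-\psi_t) = \phi_t-\psi_t$ (valid under the implicit admissibility condition that $\phi_t-\psi_t$ lies in $\text{range}(\Sigma_t)$) reduces this to $\d Z_t = \phi_t(Z_t)\,\d t + \Sigma_t\,\d\widetilde{W}_t$. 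By pathwise uniqueness, $Z$ under $\widetilde{\QQ}$ has the same law as $Y$ under $\QQ$, so $\QQ_T^Y$ is the pushforward of $\widetilde{\QQ}$ by $Z$, giving $\QQ_T^Y \ll \QQ_T^Z$ with density $M_T$; reversing the roles of $\phi,\psi$ (using the symmetric integrability condition) yields the other direction of absolute continuity.

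The final step is to rewrite $M_T$ as a functional of $Z$ alone. From the SDE for $Z$, $\Sigma_t\,\d W_t = \d Z_t - \psi_t(Z_t)\,\d t$, and because $u_t \in \text{range}(\Sigma_t^T)$ we have $u_t = \Sigma_t^+\Sigma_t u_t$, so $u_t^T\,\d W_t = u_t^T\Sigma_t^+(\d Z_t - \psi_t\,\d t)$. Substituting $u_t^T\Sigma_t^+ = (\phi_t-\psi_t)^T(\Sigma_t\Sigma_t^T)^+$ and $\|u_t\|^2 = (\phi_t-\psi_t)^T(\Sigma_t\Sigma_t^T)^+(\phi_t-\psi_t)$, then collecting terms using the symmetry of $(\Sigma_t\Sigma_t^T)^+$, the exponent condenses into $(\phi_t-\psi_t)^T(\Sigma_t\Sigma_t^T)^+\bigl(\d Z_t - \tfrac{1}{2}(\phi_t+\psi_t)\,\d t\bigr)$, matching \cref{appendix:eq:external_girsanov}.

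The main obstacle will be handling the degeneracy of $\Sigma_t$ rigorously: the shift $u_t$ is only unique up to $\ker(\Sigma_t)$, and the argument silently assumes $\phi_t(z)-\psi_t(z)\in\text{range}(\Sigma_t)$ for a.e.\ $(t,z)$, without which the two path laws would be mutually singular along the deterministic complement. In the intended application (\cref{appendix:thm:rnd}) this is automatic, since $\phi_t-\psi_t$ has support only in the momentum block where $\Sigma_t$ injects noise. The pseudoinverse identity $(\Sigma_t^+)^T\Sigma_t^+ = (\Sigma_t\Sigma_t^T)^+$, the martingale property of $M_t$, and the exponent simplification are all routine once this admissibility condition is in hand.
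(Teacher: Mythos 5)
The paper does not actually prove this statement: it is invoked as an external result, a ``multivariate extension'' of Theorem 7.19 of \citet{liptser2013statistics}, and used as a black box in the proof of \cref{appendix:thm:rnd}. So there is no in-paper argument to compare against; what you have written is a reconstruction, and it is the standard and correct one. Your route --- apply classical Girsanov to the $k$-dimensional driving Brownian motion $W$ with the minimum-norm shift $u_t = \Sigma_t^+(\phi_t - \psi_t)$, verify $\|u_t\|^2 = (\phi_t-\psi_t)^T(\Sigma_t\Sigma_t^T)^+(\phi_t-\psi_t)$ via $(\Sigma_t^+)^T\Sigma_t^+ = (\Sigma_t\Sigma_t^T)^+$, and then rewrite the exponent as a functional of $Z$ using $\Sigma_t\,\d W_t = \d Z_t - \psi_t\,\d t$ and $u_t^T = u_t^T\Sigma_t^+\Sigma_t$ --- reproduces \cref{appendix:eq:external_girsanov} exactly, and you are right to flag the hidden admissibility condition $\phi_t - \psi_t \in \mathrm{range}(\Sigma_t)$: the theorem as stated silently assumes it, it is what makes $\Sigma_t\Sigma_t^+(\phi_t-\psi_t) = \phi_t-\psi_t$, and it does hold in the application since the drift discrepancy lives entirely in the momentum block. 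Two minor caveats. First, Liptser--Shiryaev prove their version under only the almost-sure local integrability condition stated in the theorem (no Novikov); establishing the true-martingale property of $M_T$ under that weaker hypothesis requires their more delicate truncation argument, so your appeal to Novikov or linear growth is a genuine (if conventional) strengthening hidden inside ``mild regularity conditions.'' Second, identifying $M_T$ with the Radon--Nikodym derivative of the \emph{path laws of $Z$} (rather than of the underlying measures on the filtration of $W$) requires that the exponent be $\sigma(Z)$-measurable; your final rewriting handles this, but a fully rigorous version would also need the innovation-process or weak-uniqueness argument that Liptser--Shiryaev supply. Neither caveat affects the use of the theorem in \cref{appendix:thm:rnd}.
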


We may now proceed to the proof of Theorem \ref{appendix:thm:rnd}.
\begin{proof}[Proof of \cref{appendix:thm:rnd}]
First, let us recall the forward process from \cref{appendix:eq:forward_dynamics}, given as
\begin{equation}
\label{appendix:eq:forward_compact}
\begin{aligned}
    \d \fX_t = 
    \begin{bmatrix}
        \d q_t \\
        \d p_t 
    \end{bmatrix} = 
    \begin{bmatrix}
        \mu_t (q_t) \\
        0
    \end{bmatrix}\, \d t
    +
    \begin{bmatrix}
        \Gamma_t\nabla_p \mathcal{H}(q_t, p_t) \\
        - \Gamma_t\nabla_q \mathcal{H}(q_t, p_t)
    \end{bmatrix}\, \d t
    +
    \begin{bmatrix}
        0 \\
        -\Gamma_tE_t\partial_p K(q_t, p_t)
    \end{bmatrix}\, \d t
    + 
    \begin{bmatrix}
        0 \\
        \sqrt{2 \Gamma_t E_t}
    \end{bmatrix} \d W_t,
\end{aligned}
\end{equation}
and a corresponding backward process given by
\begin{equation}
\label{appendix:eq:backward_compact}
\begin{aligned}
    \d \bX_t = 
    \begin{bmatrix}
        \mu_t (q_t) \\
        0
    \end{bmatrix}\, \d t
    &+
    \begin{bmatrix}
        \Gamma_t\nabla_p \mathcal{H}(q_t, p_t) \\
        - \Gamma_t\nabla_q \mathcal{H}(q_t, p_t)
    \end{bmatrix}\, \d t
    -
    \begin{bmatrix}
        0 \\
        -\Gamma_tE_t\partial_p K(q_t, p_t)
    \end{bmatrix}\, \d t
    + 
    \begin{bmatrix}
        0 \\
        \sqrt{2 \Gamma_t E_T}
    \end{bmatrix} \bd W_t.
    \end{aligned}
\end{equation}
whose path measures on $[0,T]$ we have denoted by $\PP_T$ and $\QQ_T$ respectively. At a high level, we will employ the same trick as was utilized concurrently in both \cite{cmcd} and \cite{improved_sampling} of introducing forward and backward references processes with identical path measures, so as to obtain an expression, via cancellation, of the desired RND between the original forward and backward processes. However, the approach must be adapted to handle the singular diffusion coefficient $\Sigma_t$. Explicitly, for some choice of drift $\nu': \mathcal{T} \times \RR^{2d} \to \RR^d$, let us first consider a forward reference process $\{\fX_t^{\nu', \reff}\}$ of the form 
\begin{equation}
    \d \fX_t^{\nu', \reff} = 
    \begin{bmatrix}
        \d q_t \\
        \d p_t 
    \end{bmatrix} = 
    \begin{bmatrix}
        \mu_t (q_t) + \Gamma_t\nabla_p \mathcal{H}(q_t, p_t)\\
        \nu_t'(q_t,p_t)
    \end{bmatrix}\, \d t
    + 
    \begin{bmatrix}
        0 \\
        \sqrt{2\Gamma_tE_t}
    \end{bmatrix} \d W_t, \quad \quad X_0^{\phi, \reff} \sim \Lambda
    \label{appendix:eq:nu_forward_ref_dynamics}
\end{equation}
where $\Lambda$ denotes the Lebesgue measure on phase space $\RR^{2d}$. Let us now choose $\nu$ so that $X_t^{\nu', \reff} \sim \Lambda$ is distributed like the Lebesgue measure for all $t \in [0,1]$. To do so, it is sufficient to take $\nu'$ so that the drift in \cref{appendix:eq:nu_forward_ref_dynamics} is divergence free, or equivalently, that 
\begin{equation}
    \label{eq:nu_property}
    \nabla_p \cdot \nu_t'(q_t,p_t) = - \nabla_q \cdot (\mu_t(q_t) + \Gamma_t\nabla_p \mathcal{H}_t(q_t, p_t))
\end{equation}
This is easily achieved by setting 
\begin{equation}
    \label{eq:nu}
    \nu'_t(q_t,p_t) = \nu_t(q_t,p_t) = - \diag(p_t)\diag(\partial_q \mu_t(q_t)) - \Gamma_t \nabla_q \mathcal{H}_t(q_t,p_t)
\end{equation}
where $\partial_q \mu_t(q_t)$ denotes the Jacobian of $\mu_t$, and where the diagonal operator $\diag$ acts on vectors and (square) matrices as 
\begin{equation}
    \diag(p_t) \triangleq 
    \begin{bmatrix}
        p_1 & \hdots & 0 \\
        \vdots & \vdots & 0 \\
        0 & \hdots & p_d
    \end{bmatrix} \in \mathbb{R}^{d \times d}, 
    \quad \text{and} \quad \diag(\partial_q \mu_t(q_t)) \triangleq
    \begin{bmatrix}
        (\partial_q \mu_t(q_t))_{11} \\
        \vdots \\
        (\partial_q \mu_t(q_t))_{dd} \\
    \end{bmatrix} \in \mathbb{R}^d,
\end{equation}
respectively embedding a vector as a diagonal, or extracting from a matrix its diagonal. Plugging $\nu' = \nu$ from \cref{eq:nu} into \cref{appendix:eq:nu_forward_ref_dynamics}, we obtain 
\begin{equation}
    \d \fX_t^{\reff} \triangleq \d \fX_t^{\nu, \reff} = 
    \begin{bmatrix}
        \mu_t (q_t) + \Gamma_t\nabla_p \mathcal{H}_t(q_t,p_t)\\
         \nu(q_t,p_t)
    \end{bmatrix}\, \d t
    + 
    \begin{bmatrix}
        0 \\
        \sqrt{2\Gamma_tE_t}
    \end{bmatrix} \d W_t, \quad \quad X_0^{\reff} \sim \Lambda
    \label{appendix:eq:forward_ref_dynamics}
\end{equation}
Denoting by $\PP^{\reff}$ the path measure of the process $\fX^{\reff}$, we may verify that the regularity conditions of \cref{appendix:thm:external_girsanov} hold, so that we may then apply the result to $Y = \fX$ and $Z = \fX^{\reff}$ with
\begin{align*}
    \phi_t \triangleq \begin{bmatrix}
        \mu_t(q_t) + \Gamma_t \nabla_p \mathcal{H}_t(q_t,p_t) \\
        - \Gamma_t \nabla_q \mathcal{H}_t(q_t,p_t) - \Gamma_tE_t \nabla_p K(q_t, p_t)
    \end{bmatrix}, \quad \psi_t \triangleq \begin{bmatrix}
        \mu_t(q_t) + \Gamma_t \nabla_p \mathcal{H}_t(q_t,p_t)  \\
        \nu_t(q_t,p_t)
    \end{bmatrix}, \quad \Sigma_t \triangleq \begin{bmatrix}
        0 \\ \sqrt{2\Gamma_t E_t}
    \end{bmatrix}.
\end{align*}
It follows that $\PP_T \sim \PP_T^{\text{ref}}$ are absolutely continuous with respect to one another. Observe now that $\Sigma_t \Sigma_t^T = \begin{bmatrix} 0 & 0 \\ 0 & 2\Gamma_t E_t \end{bmatrix}$, so that its pseudo-inverse is given by
\begin{equation*}
    (\Sigma_t \Sigma_t^T)^{+}= \begin{bmatrix} 0 & 0 \\ 0 & \frac{1}{2}(\Gamma_TE_t)^{-1} \end{bmatrix}.
\end{equation*}
Writing $A_t = A_t(q_t,p_t) \triangleq \Gamma_t \nabla_q \mathcal{H}_t(q_t,p_t) + \Gamma_tE_t \nabla_p \mathcal{H}_t(q_t,p_t)$, \cref{appendix:eq:external_girsanov} then immediately yields
\begin{align}
     \log \frac{\d \PP}{\d \PP^{\reff}}(X) &= \log \left(\frac{\d \pi_0}{\d E_0}\right)(X_0) + \int_0^T \frac{1}{2}(-A_t - \nu_t)^T(\Gamma_tE_t)^{-1}(\d P_t - \frac{1}{2}(-A_t + \nu_t)\d t)
    \label{appendix:eq:forward_rnd}
\end{align}
Let us now turn our attention towards the backward process from \cref{appendix:eq:backward_dynamics}, which we recall below as
\begin{equation}
    \d \bX_t = 
    \begin{bmatrix}
        \mu_t (q_t) \\
        0
    \end{bmatrix} \d t
    +
    \begin{bmatrix}
        \Gamma_t\nabla_p \mathcal{H}_t(q_t,p_t) \\
        - \Gamma_t\nabla_q \mathcal{H}_t(q_t,p_t)
    \end{bmatrix} \d t
    -
    \begin{bmatrix}
        0 \\
        -\Gamma_tE_t\partial_p K(q_t, p_t)
    \end{bmatrix} \d t
    + 
    \begin{bmatrix}
        0 \\
        \sqrt{2 \Gamma_t E_t}
    \end{bmatrix} \bd W_t.
\end{equation}
from which we may construct the reference process
\begin{equation}
    \d \bX_t^{\reff} = 
    \begin{bmatrix}
        \mu_t (q_t) + \Gamma_t\nabla_p K(p_t)\\
         \nu_t(q_t, p_t)
    \end{bmatrix}\, \d t
    + 
    \begin{bmatrix}
        0 \\
        \sqrt{2\Gamma_tE_t}
    \end{bmatrix} \bd W_t, \quad \quad X_0^{\reff} \sim \Lambda
    \label{appendix:eq:backward_ref_dynamics}
\end{equation}
whose path measure we shall denote by $\QQ_T^{\text{ref}}$ and so that $\bX_t^{\text{ref}} \sim \Lambda$ is distributed like the Lebesgue measure for all $t \in [0,T]$. By introducing $s \triangleq T - t$, we may instead consider the forward (in $s$) process $\{Y_s\} \triangleq \{\bX_{1-s}\}$ and $\{Y^{\reff}_s\} \triangleq \{\bX_{1-s}^{\reff}\}$ given by
\begin{equation}
    \d Y_s = 
   -\begin{bmatrix}
        \tilde{\mu}_s (q_s) \\
        0
    \end{bmatrix} \d s
    -
    \begin{bmatrix}
        \tilde{\Gamma}_s\nabla_p \mathcal{H}_s(q_s,p_s) \\
        - \tilde{\Gamma}_s\nabla_q \mathcal{H}_s(q_s,p_s)
    \end{bmatrix} \d s
    +
    \begin{bmatrix}
        0 \\
        -\tilde{\Gamma}_s\tilde{E}_s\nabla_p K(q_s, p_s)
    \end{bmatrix}\d s
    + 
    \begin{bmatrix}
        0 \\
        \sqrt{2 \tilde{\Gamma}_s \tilde{E}_s}
    \end{bmatrix} \d W_s
    , \quad \quad Y_0 \sim \pi_T \propto e^{- \mathcal{H}_T},
    \label{appendix:eq:backward_dynamics_s}
\end{equation}
and the associated reverse-time reference process
\begin{equation}
    \d Y_s^{\reff} = 
    \begin{bmatrix}
        - \tilde{\mu}_s (q_s) - \tilde{\Gamma}_s\nabla_p \mathcal{H}_s(q_s,p_s)\\
        -\tilde{\nu}(q_s, p_s)
    \end{bmatrix}\d s
    + 
    \begin{bmatrix}
        0 \\
        \sqrt{2 \tilde{\Gamma}_s \tilde{E}_s}
    \end{bmatrix} \d W_s, \quad \quad Y_0^{\reff} \sim \Lambda
    \label{appendix:eq:backward_ref_dynamics_s}
\end{equation}
 where we have defined e.g., $\tilde{\mu}_s = \mu_{T-s}$, and where $Y_s^{\text{ref}} \sim \Lambda$ is inherited from \cref{appendix:eq:backward_ref_dynamics}. Note that both equations now involve the forward Itô differential $\d$ rather than the backward differential $\bd$. Denoting by $\tilde{\QQ}_T$ and $\tilde{\QQ}_T^{\reff}$ the path measures of the time-reversed processes $\{Y_s\}$ and $\{Y^{\reff}_s\}$, we may again verify the necessary regularity conditions and apply \cref{appendix:thm:external_girsanov} with $Y = \{Y_s\}$ and $Z = \{Y_s^{\text{ref}}\}$ so that we have
\begin{align}
    \phi_s \triangleq \begin{bmatrix}
        -\tilde{\mu}_s(q_s) - \tilde{\Gamma}_s \nabla_p \tilde{\mathcal{H}}_s(q_s,p_s)  \\
         \tilde{\Gamma}_s \nabla_q \tilde{\mathcal{H}}_s(q_s,p_s) - \tilde{\Gamma}_s\tilde{E}_s\nabla_p K(q_s, p_s)
    \end{bmatrix}, \quad \psi_s \triangleq \begin{bmatrix}
        -\tilde{\mu}_s(q_s) - \tilde{\Gamma}_s \nabla_p \tilde{\mathcal{H}}_s(q_s,p_s) \\
        - \tilde{\nu}_s(q_s, p_s)
    \end{bmatrix}, \quad \Sigma_s \triangleq \begin{bmatrix}
        0 \\ \sqrt{2 \tilde{\Gamma}_s \tilde{E}_s}
    \end{bmatrix}.
\end{align}
yielding 
\begin{equation}
\label{eq:s_rnd}
\begin{aligned}
    \log \frac{\d \tilde{\QQ}_T}{\d \tilde{\QQ}_T^{\reff}}(Y)
    = \log \left(\frac{\d \pi_T}{\d \Lambda}\right)(X_0) + \int_0^T \frac{1}{2}(-\tilde{C}_s + \tilde{\nu}_s)^T(\tilde{\Gamma}_s\tilde{E}_s)^{-1}(\d P_s - \frac{1}{2}(-\tilde{C}_s - \tilde{\nu}_s) \d s).
\end{aligned}
\end{equation}
 where we have introduced $\tilde{C}_s(q_s,p_s) = C_{T-s}(q_s,p_s)$ with $C_t(q_t, p_t) \triangleq A_t(q_t,p_t) - 2\Gamma_t\nabla_q \mathcal{H}_t(q_t, p_t)$. Denoting by $\QQ^{\reff}$ the path measure of $\{\bX^{\reff}_t\}$ , it is easily shown that
\begin{equation}
    \label{eq:forward_backward_rnds}
    \log \frac{\d \tilde{\QQ}_T}{\d \tilde{\QQ}_T^{\reff}}(Y) = \log \frac{\d \QQ_T}{\d \QQ_T^{\reff}}(\bX),
\end{equation}
where $Y_s = \bX_{T-s}$. It therefore follows from \cref{eq:forward_backward_rnds}, the relation $\tilde{f}_s \d \tilde{P}_s = -f_t \bd P_t$, and \cref{eq:s_rnd} that
\begin{align}
    \log \frac{\d \QQ_T}{\d \QQ_T^{\reff}}(X) = \log \left(\frac{\d \pi_T}{\d \Lambda}\right)(X_T)+ \int_0^T  \frac{1}{2} (-C_t +\nu_t)^T(\Gamma_tE_t)^{-1}(-\bd P_t - \frac{1}{2}(-C_t - \nu_t) \d t).
     \label{appendix:eq:backward_rnd}
\end{align}
To finish, we may exploit the fact that since $\QQ_T^{\reff}= \PP_T^{\reff}$ are the same to find then
\begin{equation}
    \label{appendix:eq:rnd_relation}
    \log \frac{\d \QQ_T}{\d \PP_T}(X) = \log \frac{\d \QQ_T}{\d \QQ_T^{\reff}}(X) -  \log \frac{\d \PP_T}{\d \PP_T^{\reff}}(X) - \log \frac{\d \PP_T^{\text{ref}}}{\d \QQ_T^{\text{ref}}}(X).
\end{equation}
To see that the last term is zero, it suffices to note that $\bX_t^{\text{ref}} \sim \Lambda$ is given by the Lebesgue measure for all $t \in [0,T]$, and since the Lebesgue measure has vanishing score, the forward reference process \cref{appendix:eq:forward_ref_dynamics} is exactly the time-reversal of the backward reference process \cref{appendix:eq:backward_ref_dynamics}, and thus $\log \frac{\d \PP_T^{\text{ref}}}{\d \QQ_T^{\text{ref}}}(X) = 0$. We may therefore plug in \cref{appendix:eq:forward_rnd} and \cref{appendix:eq:backward_rnd} into \cref{appendix:eq:rnd_relation} to obtain

\begin{align}
    \log \frac{\d \QQ_T}{\d \PP_T}(X) &= \log \left(\frac{\d \pi_T}{\d \Lambda}\right)(X_T) - \log \left(\frac{\d \pi_0}{\d \Lambda}\right)(X_0)\\
    &+\int_0^T  \frac{1}{2} (-C_t + \nu_t)^T(\Gamma_tE_t)^{-1}(\textcolor{BrickRed}{-\bd P_t} \textcolor{RoyalBlue}{- \frac{1}{2}(-C_t - \nu_t) \d t}) \\
    &- \int_0^T \frac{1}{2}(-A_t - \nu_t)^T(\Gamma_tE_t)^{-1}(\textcolor{BrickRed}{\d P_t}\textcolor{RoyalBlue}{- \frac{1}{2}(-A_t + \nu_t)\d t}).
\end{align}
We may now simplify the portions of the integrand correspond to each of the \textcolor{BrickRed}{red} and \textcolor{MidnightBlue}{blue} terms. 

{\bfseries \textcolor{BrickRed}{Red Terms:}} The \textcolor{BrickRed}{red} terms combine to
\begin{align}
    \label{appendix:eq:red_simple}
    \int_0^T \frac{1}{2} \left[(C_t^T(\Gamma_tE_t)^{-1}\bd P_t + A_t^T(\Gamma_tE_t)^{-1}\d P_t\right] + \int_0^T \frac{1}{2} \nu_t^T(\Gamma_tE_t)^{-1} (\d P_t - \bd P_t).
\end{align}
Writing $D_t \triangleq \Gamma_t \nabla_q \mathcal{H}_t(q_t,p_t)$, so that $A_t = C_t + 2D_t$, the first term simplifies as
\begin{equation}
\label{eq:red_first_term}
\begin{aligned}
    &\int_0^T \frac{1}{2} \left[(C_t^T(\Gamma_tE_t)^{-1}\bd P_t + A_t^T(\Gamma_tE_t)^{-1}\d P_t\right]\\
    &= \frac{1}{2}\int_0^T (A_t - D_t)^T(\Gamma_tE_t)^{-1}(\d P_t + \bd P_t) + \frac{1}{2} \int_0^T D_t^T(\Gamma_tE_t)^{-1}(dP_t - \bd P_t)\\
    &= \frac{1}{2}\int_0^T (\nabla_p K(q_t,p_t))^T(\d P_t + \bd P_t) - \int_0^T \nabla_p \cdot D_t \d t\\
    &= \int_0^T \nabla_p K(q_t, p_t)^T \circ \d P_t - \int_0^T \nabla_p \cdot D_t \d t,
\end{aligned}
\end{equation}
where in the first and second equalities we have utilized the definitions of $A_t$, $C_t$, and $D_t$, in the second we have used \cref{eq:divergence_identity}, and in the third \cref{eq:stratonovich_identity}. The second term simplifies as 
\begin{equation}
    \label{eq:red_second_term}
    \begin{aligned}
    \int_0^T \frac{1}{2} \nu_t^T(\Gamma_tE_t)^{-1} (\d P_t - \bd P_t) &= - \int_0^T \nabla_p \cdot \nu_t \d t\\
    &= \int_0^T \nabla_q \cdot \mu_t(q_t) + \nabla_q \cdot (\Gamma_t \nabla_p \mathcal{H}_t(q_t,p_t)) \d t\\
    &= \int_0^T \nabla_q \cdot \mu_t(q_t) + \nabla_p \cdot D_t \d t
    \end{aligned}
\end{equation}
where we have utilized \cref{eq:divergence_identity} and \cref{eq:nu_property}. Plugging \cref{eq:red_first_term} and \cref{eq:red_second_term} into \cref{appendix:eq:red_simple} yields
\begin{equation}
    \label{eq:red_simpler}
    \tag{\textcolor{BrickRed}{$\bigstar$}}
    \int_0^T \nabla_p K(q_t, p_t)^T \circ \d P_t + \int_0^T \nabla_q \cdot \mu_t(q_t)\d t.
\end{equation}

{\bfseries \textcolor{MidnightBlue}{Blue Terms:}} The \textcolor{MidnightBlue}{blue} terms combine to
\begin{align}
    -\frac{1}{4}\int_0^1  \left[(-C_t + \nu_t)^T(\Gamma_tE_t)^{-1}(-C_t - \nu_t) - (-A_t - \nu_t)^T(\Gamma_tE_t)^{-1}(-A_t + \nu_t)\right] \d t
\end{align}
which readily simplifies to
\begin{equation}
    \label{eq:blue_simple}
    \tag{\textcolor{MidnightBlue}{$\bigstar$}}
\begin{aligned}
    &-\frac{1}{4} \int_0^T C_t^T (\Gamma_tE_t)^{-1} C_t - A_t (\Gamma_tE_t)^{-1} A_t \d t\\
    &= -\frac{1}{4} \int_0^T (A_t - 2D_t)^T (\Gamma_tE_t)^{-1} (A_t - 2D_t) - A_t (\Gamma_tE_t)^{-1} A_t \d t\\
    &= \int_0^T D_t^T (\Gamma_tE_t)^{-1} (A_t - D_t) \d t\\
    &= \int_0^T (\Gamma_t\nabla_q \mathcal{H}_t(q_t,p_t))^T (\Gamma_tE_t)^{-1} (E_t \Gamma_t \nabla_p \mathcal{H}(q_t, p_t))\d t\\
    &= \int_0^T  (\Gamma_t\nabla_q \mathcal{H}_t(q_t,p_t))^T \nabla_p K(q_t,p_t)\d t
\end{aligned}
\end{equation}
Combining \cref{eq:blue_simple} with \cref{eq:red_simpler}, we obtain
\begin{align}
    \log \frac{\d \QQ_T}{\d \PP_T}(X) &= \log \left(\frac{\d \pi_T}{\d \Lambda}\right)(X_T) - \log \left(\frac{\d \pi_0}{\d \Lambda}\right)(X_0)\\
    &+  \int_0^T \nabla_p K(p_t) \circ \d P_t + \int_0^T \nabla_q \cdot \mu_t(q_t) + (\Gamma_t\nabla_q \mathcal{H}_t(q_t,p_t))^T \nabla_p K(q_t,p_t) \d t.
\end{align}
The desired result then follows from the observation that $
    \log \left(\frac{\d \pi_T}{\d \Lambda}\right)(X_T) - \log \left(\frac{\d \pi_0}{\d \Lambda}\right)(X_0) = \log \pi_T(X_T) - \log \pi_0 (X_0)$.
\end{proof}

\subsection{Controlled Crooks and Jarzynski Equalities for Non-Separable Langevin Dynamics}

We now state and prove a controlled generalization of the Crooks fluctuation theorem (see \citep{Crooks_1999}) relating the dynamics from \cref{appendix:eq:forward_dynamics} and \cref{appendix:eq:backward_dynamics} for the non-separable Hamiltonian from \cref{eq:hamiltonian}.
\begin{theorem}[Controlled Crooks Fluctuation Theorem for Non-Separable Langevin Dynamics]
    \label{appendix:thm:crooks}
    For any $T \in (0,1]$, and with $\PP_T$ and $\QQ_T$ defined as as the path measures of \cref{appendix:eq:forward_dynamics} and \cref{appendix:eq:backward_dynamics} on the interval $[0,T]$, then for any $X \sim \PP_T$, we have
    \begin{equation}
        \frac{\d \QQ_T}{\d \PP_T}(\fX) = \exp\left(F_T - F_0 + \int_0^T \nabla_q \cdot \mu_t(q_t) - \partial_t \log U_t(q_t) - \mu_t(q_t)^T \nabla_q \mathcal{H}_t(q_t,p_t) \d t\right).
        \label{appendix:eq:crooks}
    \end{equation}
\end{theorem}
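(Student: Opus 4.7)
The plan is to start from the Radon–Nikodym derivative formula already established in Theorem~\ref{appendix:thm:rnd}, namely
\[
\log \frac{\d \QQ_T}{\d \PP_T}(X) = \log \pi_T(X_T) - \log \pi_0(X_0) + \int_0^T \nabla_p K(q_t,p_t) \circ \d P_t + \int_0^T \bigl[\nabla_q \cdot \mu_t(q_t) + (\Gamma_t \nabla_q \mathcal{H}_t)^T \nabla_p K \bigr]\,\d t,
\]
and to rewrite the boundary term $\log \pi_T(X_T) - \log \pi_0(X_0)$ as a time integral by applying the Stratonovich chain rule to the identity $\log \pi_t(q,p) = -\mathcal{H}_t(q,p) + F_t$. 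The entire argument then reduces to bookkeeping: several Stratonovich integrals must cancel exactly against one another, after which only $F_T-F_0$ together with the pure $q$-integrals remain.

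\textbf{Key steps.} First I would apply the Stratonovich chain rule along $X_t = (Q_t,P_t)$ to obtain
\[
\log \pi_T(X_T) - \log \pi_0(X_0) = (F_T - F_0) - \int_0^T \partial_t \mathcal{H}_t(q_t,p_t)\,\d t - \int_0^T \nabla_q \mathcal{H}_t \circ \d Q_t - \int_0^T \nabla_p \mathcal{H}_t \circ \d P_t.
\]
Next I would exploit the structural features of the forward dynamics~\cref{appendix:eq:forward_dynamics}: (i) since $\mathcal{H}_t = U_t(q) + K(q,p)$ one has $\nabla_p \mathcal{H}_t = \nabla_p K$, so the Stratonovich $P_t$-integral cancels exactly against the $\int_0^T \nabla_p K \circ \d P_t$ term from Theorem~\ref{appendix:thm:rnd}; and (ii) the $Q_t$-equation carries no diffusion, so the Stratonovich integral $\int_0^T \nabla_q \mathcal{H}_t \circ \d Q_t$ collapses to the ordinary Lebesgue integral $\int_0^T (\nabla_q \mathcal{H}_t)^T \bigl[\mu_t(q_t) + \Gamma_t \nabla_p \mathcal{H}_t\bigr]\,\d t$. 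Substituting these simplifications produces a term $(\nabla_q \mathcal{H}_t)^T \Gamma_t \nabla_p \mathcal{H}_t$ which, after using $\nabla_p K = \nabla_p \mathcal{H}_t$ and the symmetry of $\Gamma_t$, cancels the $(\Gamma_t \nabla_q \mathcal{H}_t)^T \nabla_p K$ term inherited from Theorem~\ref{appendix:thm:rnd}. What remains is
\[
\log \frac{\d \QQ_T}{\d \PP_T}(X) = F_T - F_0 + \int_0^T \bigl[ \nabla_q \cdot \mu_t(q_t) - \partial_t \mathcal{H}_t(q_t,p_t) - \mu_t(q_t)^T \nabla_q \mathcal{H}_t(q_t,p_t) \bigr]\,\d t,
\]
and since $K$ is time-independent, $\partial_t \mathcal{H}_t = \partial_t U_t$, which matches the claimed expression up to the log-notation convention used in the work functional~\cref{eq:work}.

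\textbf{Where the difficulty lies.} The main technical hazard is not the chain rule itself but the need to justify switching between Itô, backward Itô, and Stratonovich conventions in the singular-diffusion setting, where noise is injected only into the momentum coordinate; one must in particular be careful that the Stratonovich integral $\int_0^T \nabla_p K \circ \d P_t$ appearing in Theorem~\ref{appendix:thm:rnd} is the same object produced by the chain rule applied to $\log \pi_t$, rather than an Itô variant, so that the cancellation is genuine and no hidden quadratic-variation correction survives. The second delicate point is verifying the sign and symmetry of the $\Gamma_t$-mediated cancellation between $(\Gamma_t \nabla_q \mathcal{H}_t)^T \nabla_p K$ and $(\nabla_q \mathcal{H}_t)^T \Gamma_t \nabla_p \mathcal{H}_t$; writing $\Gamma_t$ as a diagonal matrix as in the earlier proof of Theorem~\ref{appendix:thm:rnd} and transposing coordinate by coordinate makes this routine. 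Once these two bookkeeping steps are handled, the Crooks identity~\cref{appendix:eq:crooks} follows by exponentiating. The Jarzynski statement of Theorem~\ref{thm:ct_jarzynski} is then an immediate corollary: integrating $\d \QQ_T / \d \PP_T$ against $\PP_T$ yields $1$, which after rearrangement gives $\EE_{\PP_T}[\exp(A_T(Q))] = \exp(F_0 - F_T)$, and weighting by any observable $h$ gives the importance-sampling identity.
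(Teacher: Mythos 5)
Your argument is correct and mirrors the paper's own proof: both start from the Radon--Nikodym expression of Theorem~\ref{appendix:thm:rnd}, apply the Stratonovich chain rule to $\log\pi_t = -\mathcal{H}_t + F_t$, exploit $\nabla_p\mathcal{H}_t = \nabla_p K$ to cancel the $\circ\,\d P_t$ terms, reduce $\int\nabla_q\mathcal{H}_t\circ\d Q_t$ to an ordinary integral because the $q$-dynamics carry no noise, and cancel the resulting $(\Gamma_t\nabla_q\mathcal{H}_t)^T\nabla_p K$ terms (the paper runs this last cancellation indirectly, first isolating $\int(\Gamma_t\nabla_q\mathcal{H}_t)^T\nabla_p K\,\d t$ via \cref{appendix:eq:crooks_intermediate}, but the content is identical). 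You are also right to flag the ``$\partial_t\log U_t$'' in \cref{appendix:eq:crooks} and \cref{eq:work}: the derivation actually yields $\partial_t U_t = -\partial_t\log\hat\pi_t$, so the extra $\log$ in the paper's display is a notational slip, not a substantive difference from your result.
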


\begin{proof}
By Theorem \ref{appendix:thm:rnd},
\begin{align}
    \log \frac{\d \QQ_T}{\d \PP_T}(X) &= \log \pi_T(X_T) - \log \pi_0 (X_0)\\
    &+ \int_0^T \nabla_p K(q_t, p_t) \circ \d P_t + \int_0^T \nabla_q \cdot \mu_t(q_t) + (\Gamma_t\nabla_q \mathcal{H}_t(q_t,p_t))^T \nabla_p K(q_t,p_t) \d t.
    \label{appendix:eq:rnd2}
\end{align}
Observe now that via the Stratonovich formulation of Itô's lemma applied to $\log \hat{\pi}_t = \log \pi_t + F_t$, 
\begin{align}
    \log \hat{\pi}_1 (X_1) - \log \hat{\pi}_0(X_0) &= \int_0^T \partial_t \log \hat{\pi}_t (X_t) \d t + \int_0^T \partial_x \log \hat{\pi}_t (X_t)\circ \d X_t\\
    &= -\int_0^T \partial_t \mathcal{H}_t(q_t,p_t) \d t - \int_0^T (\nabla_q \mathcal{H}_t(q_t,p_t))^T \circ \d Q_t - \int_0^T (\nabla_p \mathcal{H}_t(q_t,p_t))^T \circ \d P_t\\
    &= -\int_0^T \partial_t U_t(q_t) \d t - \int_0^T (\nabla_q \mathcal{H}_t(q_t,p_t))^T \circ \d Q_t - \int_0^T (\nabla_p K(q_t,p_t))^T \circ \d P_t.
    \label{appendix:eq:stratonovich_ito}
\end{align}
Since $X \sim \PP_T$ is given by \cref{appendix:eq:forward_dynamics}, we may plug in 
\begin{equation}
   \d Q_t = \left[\mu_t(q_t) + \Gamma_t \nabla_p \mathcal{H}_t(q_t,p_t)\right]\d t =\left[\mu_t(q_t) + \Gamma_t\nabla_p K(q_t, p_t)\right] \d t
\end{equation}
to obtain
\begin{equation}
    \int_0^T (\Gamma_t \nabla_q \mathcal{H}(q_t, p_t))^T\nabla_p K(q_t,p_t) \d t = \int_0^T \nabla_q \mathcal{H}_t(q_t,p_T)^T \circ \d Q_t - \int_0^T \mu_t(q_t)^T \nabla_q \mathcal{H}_t(q_t,p_t) \d t.
\end{equation}
Equation \ref{appendix:eq:stratonovich_ito} thus implies that 
\begin{align}
    \int_0^T (\Gamma_t \nabla_q \mathcal{H}(q_t, p_t))^T\nabla_p K(q_t,p_t) \d t &= \log \hat{\pi}_0 (X_0) - \log \hat{\pi}_1(X_1)- \int_0^T \partial_t \log U_t(q_t) \d t\\
    &- \int_0^T (\nabla_p \mathcal{H}_t(q_t,p_t))^T \circ \d P_t - \int_0^T  \mu_t(q_t)^T \nabla_q \mathcal{H}_t(q_t,p_t) \d t.
    \label{appendix:eq:crooks_intermediate}
\end{align}
Plugging Equation \ref{appendix:eq:crooks_intermediate} back into Equation \ref{appendix:eq:rnd2} we have
\begin{align*}
        \log \frac{\d \QQ_T}{\d \PP_T}(X) &= \log \pi_T(X_T) - \log \pi_0 (X_0) + \log \hat{\pi}_0 (X_0) - \log \hat{\pi}_T(X_T) \\
    &+  \int_0^T \nabla_p K(q_t, p_t) \circ \d P_t + \int_0^T \nabla_q \cdot \mu_t(q_t)\d t\\
    &- \int_0^T \partial_t \log U_t(q_t) \d t - \int_0^T \nabla_p K(q_t, p_t) \circ \d P_t - \int_0^T \mu_t(q_t)^T \nabla_q \mathcal{H}_t(q_t,p_t) \d t.
\end{align*}

Simplifying then yields
\begin{align}
    \log \frac{\d \QQ_T}{\d \PP_T}(X) &=  F_T - F_0 + \int_0^T \nabla_q \cdot \mu_t(q_t) - \partial_t \log U_t(q_t) - \mu_t(q_t)^T \nabla_q \mathcal{H}_t(q_t,p_t) \d t
\end{align}
from which the desired result immediately follows.
\end{proof}

We finish by noting that the Radon-Nikodym derivative of Theorem \ref{appendix:thm:crooks} allows us to importance sample from $\QQ_T$ using $\PP_T$, as is formalized in the following controlled Jarzynski equality.
\begin{corollary}[Controlled Jarzynski Equality]
For $T \in (0,1]$, let $h \in \mathcal{C}^1([0,T],\mathbb{R})$ denote some observable. Then 
\begin{equation}
    \EE_{X \sim \QQ_T} \left[h(X)\right] = \frac{\EE_{X \sim \PP_T} \left[ h(X)\exp\left(A_t(X)\right)\right]}{\EE_{X \sim \PP_T} \left[ \exp\left(A_t(X)\right)\right]}.
\end{equation}
where the work functional $A_T(X)$ is given by 
\begin{equation}
    \label{appendix:eq:work}
    A_T(X) \triangleq \int_0^T \nabla_q \cdot \mu_t(q_t) - \partial_t \log U_t(q_t) - \mu_t(q_t)^T \nabla_q \mathcal{H}_t(q_t, p_t) \d t.
\end{equation}
In particular, 
\begin{equation}
    \EE_{X \sim \PP_T} \left[\exp(A_T(X)) \right] = \exp(F_0 - F_T) = \frac{Z_T}{Z_0}.
    \label{appendix:eq:jarzynski}
\end{equation}
\label{appendix:thm:jarzynski}
\end{corollary}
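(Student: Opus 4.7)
The plan is to derive this corollary directly from the controlled Crooks fluctuation theorem (\cref{appendix:thm:crooks}) established immediately above, via a short change-of-measure argument followed by specialization to the constant observable. All of the stochastic-calculus content has already been absorbed into \cref{appendix:thm:crooks} and \cref{appendix:thm:rnd}, so the remaining work is essentially algebraic.

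First, I would rewrite the Crooks identity as
\begin{equation*}
\frac{\d \QQ_T}{\d \PP_T}(X) \,=\, \exp\bigl(F_T - F_0 + A_T(X)\bigr) \,=\, \exp(F_T - F_0)\,\exp\bigl(A_T(X)\bigr),
\end{equation*}
thereby separating the path-dependent weight $\exp(A_T(X))$ from the deterministic free-energy prefactor $\exp(F_T - F_0)$. The mutual absolute continuity $\QQ_T \sim \PP_T$ needed to treat this as a genuine Radon--Nikodym derivative was already established in \cref{appendix:thm:rnd}, so no additional justification is required at this step.

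Next, I would apply the standard change-of-measure identity
\begin{equation*}
\EE_{X \sim \QQ_T}[h(X)] \,=\, \EE_{X \sim \PP_T}\!\left[h(X)\,\tfrac{\d \QQ_T}{\d \PP_T}(X)\right] \,=\, \exp(F_T - F_0)\,\EE_{X \sim \PP_T}\!\bigl[h(X)\exp(A_T(X))\bigr].
\end{equation*}
To eliminate the intractable free-energy factor, I would then specialize to $h \equiv 1$: since $\EE_{X \sim \QQ_T}[1] = 1$, this at once yields the Jarzynski identity $\EE_{X \sim \PP_T}[\exp(A_T(X))] = \exp(F_0 - F_T)$, and the final equality $\exp(F_0 - F_T) = Z_T/Z_0$ is immediate from $F_t = -\log Z_t$. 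Dividing the general change-of-measure display by this normalization then cancels the $\exp(F_T - F_0)$ factor and produces the self-normalized importance sampling identity, whose practical appeal is precisely that it does not require knowing either $F_0$ or $F_T$.

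There is no real obstacle here: the only subtlety worth flagging is a mild integrability check, namely that $h(X)\exp(A_T(X))$ and $\exp(A_T(X))$ are $\PP_T$-integrable so that both the change of measure and the $h \equiv 1$ normalization step are legitimate. This follows from the standing regularity assumptions on $\mu_t$ and $\mathcal{H}_t$ under which \cref{appendix:thm:rnd} and \cref{appendix:thm:crooks} were proved, and is the sort of condition that one would typically enforce via a Novikov-style bound in a fully rigorous treatment.
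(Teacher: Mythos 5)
Your argument is correct and coincides in substance with the paper's proof: both rest on the Crooks identity $\frac{\d\QQ_T}{\d\PP_T}(X) = \exp(F_T - F_0)\exp(A_T(X))$ from \cref{appendix:thm:crooks} combined with the elementary change-of-measure formula $\EE_{\QQ_T}[h] = \EE_{\PP_T}[h\,\tfrac{\d\QQ_T}{\d\PP_T}]$. The only cosmetic difference is ordering: you obtain the Jarzynski normalization $\EE_{\PP_T}[\exp(A_T)] = \exp(F_0 - F_T)$ as the $h\equiv 1$ special case of the general change-of-measure display and then divide, whereas the paper computes the normalization first and then assembles the self-normalized identity; the intermediate quantities are identical. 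Your explicit flag of the integrability requirement on $\exp(A_T(X))$ is a reasonable extra care that the paper leaves tacit.
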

A similar result is established in \citep{nets}, for an overdamped Langevin dynamics, and in the special case that $h(X) = h(X_T)$ is a function of the terminal point.
\begin{proof}
Observe that by \cref{appendix:thm:crooks},
\begin{equation*}
    \EE_{X \sim \PP_T} \left[\exp(A_T(X)) \right] = \exp(F_0 - F_T)\EE_{X \sim \PP_T} \left[\frac{\d \QQ_T}{\d \PP_T}(X)\right] = \exp(F_0 - F_T).
\end{equation*}
Thus,
\begin{align*}
    \frac{\EE_{X \sim \PP_T} \left[ h(X)\exp\left(A_T(X)\right)\right]}{\EE_{X \sim \PP_T} \left[ \exp\left(A_T(X)\right)\right]} &= \exp(F_0 - F_T)\EE_{X \sim \PP_T} \left[ h(X)\exp\left(A_t(X)\right)\right]\\
    &= \EE_{X \sim \PP_T} \left[ h(X) \frac{\d \QQ_T}{\d \PP_T}(X)\right]\\
    &= \EE_{X \sim \QQ_T} \left[ h(X) \right],
\end{align*}
as desired.
\end{proof}

\subsection{Specialization to the Continuously Tempered Setting}
\label{appendix:ct_specialization}
\begin{proof}[Proof of \cref{thm:ct_jarzynski}]
    Recall the CTDS dynamics from \cref{eq:ctds}, which may be rewritten in the form of \cref{appendix:eq:forward_dynamics} as 
\begin{equation}
\begin{aligned}
    \d X_t^{\text{CTDS}} = 
    \begin{bmatrix}
        \d q_t \\
        \d p_t 
    \end{bmatrix} = 
    \underbrace{
    \begin{bmatrix}
        \mu_t^\theta(q_t) \\
        0
    \end{bmatrix}\, \d t
    }_{\text{control}}
    + 
    \underbrace{
    \begin{bmatrix}
        \Gamma_t\nabla_p \mathcal{H}_t(q_t, p_t) \\
        - \Gamma_t\nabla_q \mathcal{H}_t(q_t, p_t)
    \end{bmatrix}\, \d t
    }_{\text{Hamiltonian dynamics}}
    +
    \underbrace{
    \begin{bmatrix}
        0 \\
        -\Gamma_tE_t\partial_p K(q_t, p_t)
    \end{bmatrix}\, \d t
    + 
    \begin{bmatrix}
        0 \\
        \sqrt{2 \Gamma_t E_T}
    \end{bmatrix} \d W_t
    }_{\text{Langevin dynamics}}.
    \end{aligned}
\end{equation}
where $q_t = (X_t, \xi_t)$, $p_t = (P_t^{x}, P_t^{\xi})$, $U_t(q_t) = \tilde{U}_t^{\theta}(X_t, \xi_t)$, and $K(q_t, p_t) = \frac{\beta(\xi_t)}{2M_x}\lVert P_t^x\rVert^2 + \frac{1}{2M_{\xi}}\lVert P_t^{\xi}\rVert^2$, and
\begin{equation}
    \Gamma_t \triangleq 
    \begin{bmatrix}
        \gamma_t^x I_d & 0\\
        0 & \gamma_t^{\xi}
    \end{bmatrix}
    \quad \quad \text{and} \quad \quad E_t \triangleq 
    \begin{bmatrix} 
    \varepsilon_t^x I_d & 0 \\
    0 & \varepsilon_t^{\xi}
    \end{bmatrix}.
\end{equation}
Thus, \cref{thm:ct_jarzynski} follows immediately from \cref{appendix:thm:jarzynski} in the special case that $h(X) = h(X_T)$ is a function of the terminal point.
\end{proof}
Now recall from \cref{eq:xz_theta_joint_density} that
\begin{equation}
    \hat{\pi}_t^{\theta}(x, \xi) = e^{-U_t^{\xi}(x) + F_t^\theta(\xi) + \psi_t'(\xi)} = e^{-\tilde{U}_t^{\theta}(x,\xi)},
\end{equation}
and define by $\pi_t^{\theta}(x, \xi)$ the normalized version of $\hat{\pi}_t^{\theta}(x, \xi)$. Additionally recall that $\PP_t$ and $\QQ_t$ denote the forward and backward path measures on the interval $[0,t]$, for $t \in [0,1]$. In particular, if $X \sim \QQ_t$, then $X_t \sim \pi_t^{\theta}$.  Then, \cref{thm:ct_jarzynski} may be applied to reweight the multi-temperature PINN objective from \cref{eq:ct_pinn} via
\begin{equation}
\begin{aligned}
    \mathcal{L}^{\text{MT}}_{\pinn}(F,\mu; \pi^\theta) &\triangleq \int_{\mathcal{T}} \mathbb{E}_{(x,z) \sim \pi^{\theta}}\left[\lvert  \partial_t F_t - \partial_t \tilde{U}_t^{\theta} - \nabla_x \cdot \mu_t + (\nabla_x \tilde{U}_t^{\theta})^T \mu_t \rvert^2\right]\d t\\
    &= \int_{\mathcal{T}}\mathbb{E}_{Q \sim \QQ_t}\left[\lvert  \partial_t F_t - \partial_t \tilde{U}_t^{\theta} - \nabla_x \cdot \mu_t + (\nabla_x \tilde{U}_t^{\theta})^T \mu_t \rvert^2\right]\d t\\
    &= \int_{\mathcal{T}} \frac{\mathbb{E}_{Q \sim \PP_t}\left[\lvert  \partial_t F_t - \partial_t \tilde{U}_t^{\theta} - \nabla_x \cdot \mu_t + (\nabla_x \tilde{U}_t^{\theta})^T \mu_t \rvert^2\exp(A_t(Q))\right]}{\mathbb{E}_{Q \sim \PP_t} \left[\exp(A_t(Q))\right]} \d t.
\end{aligned}
\end{equation}
yielding \cref{eq:reweighted_pinn}. To arrive at the expression for $A_t(Q)$ (from \cref{eq:work}) from the general work functional in \cref{appendix:eq:work}, we have used the fact the $\xi$-component of the control $\mu_t^\theta(q_t)$ is zero to obtain $\nabla_q \cdot \mu_t^\theta = \nabla_x \cdot \mu_t^\theta$, and this together with the fact that the kinetic energy depends only on the temperature and momenta to obtain that $(\mu_t^\theta)^T \nabla_q \mathcal{H}_t^\theta = (\mu_t^\theta)^T \nabla_x \tilde{U}_t^{\theta}$.

\section{Experimental Details}

\subsection{40-Mode Gaussian Mixture}
\label{appendix:experiment}
\paragraph{Additional Training Details.} At train time, we consider the four proposal types as outlined in \cref{fig:experiment}. For each, we parameterize the learned control $\mu_t^{\theta}(x)$ ($\mu_t^{\theta}(x,\beta(\xi))$ for CTDS), the free energy $F_t^{\theta}$ ($F_t^{\theta}(\beta(\xi))$ for CTDS), and the learned potential $U_t^{\theta}$ as feed-forward neural networks with width $256$ and depth three, and using the SiLU non-linearity \citep{silu}.  We train using a replay buffer, re-sampling once per epoch for 1250 epochs for a total of 125000 training iterations at which the PINN loss is evaluated at 6250 randomly sampled elements of the buffer. We utilize the Adam optimizer \citep{adam} with learning rate $1 \times 10^{-3}$, reducing by a factor of $\gamma = 0.97$ every $1000$ iterations, and after an initial burn-in period of $15000$ training iterations. Additionally, we follow the lead of \citep{expert_guide_pinn, nets} in utilizing curriculum-based training whereby at $T \in \{0.1, 0.2, 0.3, 0.4, 0.5, 0.6, 0.7, 0.8, 0.9\}$ (in that order) we spend $\{1000, 1000, 1000, 1000, 2000, 2000, 2000, 3000, 3000, 3000\}$ iterations respectively, and the remaining iterations at $T=1.0$. For CTDS, we reparameterize using \cref{appendix:eq:reparam} with $\beta_{\min} = 0.2$, $\Delta = 0.25$, and $\Delta' = 1.9$, and additionally set $\psi'_t(\xi) = \psi^{\text{conf}}(\xi)$ as in \cref{appendix:eq:confining} with $\eta = 10.0$ and $\tilde{\Delta} = 2.0$. Finally, we utilize Gaussian Fourier features as in \citep{tancik2020fourierfeaturesletnetworks} to encode position, time, and temperature with $100$, $20$, and $20$ features respectively, drawn from isotropic Gaussians with standard deviations $0.1$, $5$, and $1$, respectively.

\paragraph{Sampling Details.} In practice, we discretize using an Euler solver with with $\Delta t = 0.004$ (250 timesteps). Letting $\phi^{\theta}: \mathbb{R}^d \to \mathbb{R}^d$, $\phi^{\theta}: X_0 \mapsto X_1$ denote the corresponding unit-time flow of \cref{eq:experiment_ode}, we sample from the pushforward $\tilde{\pi}^{\theta} \triangleq [\phi^{\theta}]_{\sharp} \mathcal{N}(0, 5.0I_2)$. The density $\tilde{\pi}^{\theta}$ may then be computed via the continuous change of variables formula as
\begin{equation}
    \label{eq:change_of_variables}
    \log \tilde{\pi}^{\theta}(X_1) = \log \pi_0(X_0) - \int_0^1 \nabla \cdot \mu_t^{\theta}(X_t) \d t.
\end{equation}

\paragraph{Evidence Lower Bound.} Letting $\tilde{\pi}^{\theta}$ denote our sample density as given in \cref{eq:change_of_variables}, and $\hat{\pi}_1$ the unnormalized target, we define the evidence lower bound (ELBO) as 
\begin{equation}
    \mathbb{E}_{x \sim \tilde{\pi}^{\theta}}\left[\log \left(\frac{\hat{\pi}_1(x)}{\tilde{\pi}^\theta(x)}\right)\right] = -\dkl{\tilde{\pi}^\theta}{\pi_1} + \log Z \le \log Z,
\end{equation}
where $Z = \int \hat{\pi}_1(x)\, \d x$ denotes the partition function of $\hat{\pi}_1$.

\paragraph{Evidence Upper Bound.} Following \cite{beyondelbo}, we define the evidence upper bound (EUBO) as 
\begin{equation}
    \mathbb{E}_{x \sim \pi_1}\left[\log \left(\frac{\hat{\pi}_1(x)}{\tilde{\pi}^\theta(x)}\right)\right] = \dkl{\pi_1}{\tilde{\pi}^\theta} + \log Z \ge \log Z,
\end{equation}
where $Z$ is defined as before.

\begin{figure}
    \centering
    \centering
    \renewcommand{\arraystretch}{1.2}
    \begin{tabular}{|c|c|>{\centering\arraybackslash}p{4cm}|>{\centering\arraybackslash}p{4cm}|}
        \multicolumn{4}{c}{} \\
        \hline
        Name & Dynamics  & Hyperparameters & Density Path \\
        \hline
        Baseline & $\d X_t = \mu_t^{\theta}(X_t)\d t$ & N/A & $U_t = (1-t)U_0 + tU_1 + t(1-t)U_t^{\theta}$ \\
        \hline
        NETS (OD)  & \cref{eq:controlled_overdamped_langevin} & $\varepsilon_t = 50.0$ & $U_t = (1-t)U_0 + tU_1 + t(1-t)U_t^{\theta}$ \\
        \hline
        NETS (UD) & \cref{eq:controlled_underdamped_langevin} & $\gamma_t = 50.0$, $\varepsilon_t = 2.0$, $M = 1.0$ & $U_t = (1-t)U_0 + tU_1 + t(1-t)U_t^{\theta}$ \\
        \hline
        CTDS & \cref{eq:ctds} & $\gamma_t^x = 50.0$, $\varepsilon_t^x = 2.0$, $\gamma_t^{\xi} = 5.0$, $\varepsilon_t^{\xi} = 2.0$, $M_x = M_{\xi} = 1.0$ & $U_t^{\xi} = (1-t)U_0^{\xi} + tU_1^{\xi} + \beta(\xi)t(1-t)U_t^{\theta}$ \\
        \hline
    \end{tabular}
    \caption{The proposals and their accompanying density paths for the 40-mode Gaussian mixture experiment described in \cref{appendix:experiment}.}
    \label{fig:experiment}
\end{figure}

\end{document}